\documentclass[11pt]{article}

\usepackage[final]{acl}

\usepackage{times}
\usepackage{latexsym}

\usepackage[T1]{fontenc}

\usepackage{xspace}
\newif\ifshowcomments
\showcommentstrue

\makeatletter
\let\todo\@undefined
\makeatother

\ifshowcomments
    \usepackage[textsize=scriptsize,textwidth=2.4cm]{todonotes}
\else
    \usepackage[disable,textsize=scriptsize,textwidth=2.4cm]{todonotes}
\fi

\usepackage{lineno}
\usepackage{amsmath}
\usepackage{amssymb}
\usepackage{mathtools}
\usepackage{amsthm}
 \usepackage{adjustbox}
\usepackage{booktabs}
\usepackage{multirow}
\usepackage{makecell}
\usepackage{graphicx}
\usepackage{hyperref}
\usepackage{url}
\usepackage{hyperref}
\usepackage{url}
\usepackage{booktabs, multirow, array}
\usepackage{xcolor}
\usepackage{colortbl}
\usepackage{graphicx}
\usepackage{booktabs}
\usepackage{tabularx}
\usepackage{multirow}
\usepackage{threeparttable}
\usepackage{array}
\usepackage{booktabs}
\usepackage{multirow}
\usepackage{makecell}
\usepackage{comment}
\usepackage{makecell}
\usepackage{placeins}
\usepackage{listings}
\usepackage{xcolor}
\usepackage{enumitem}
\usepackage{wrapfig}
\usepackage{listings}
\usepackage{amsthm}
\usepackage{amsmath}

\newtheorem{theorem}{Theorem}
\newtheorem{proposition}{Proposition}
\newtheorem{corollary}{Corollary}
\lstdefinestyle{pythonstyle}{
    language=Python,
    basicstyle=\ttfamily\footnotesize,
    breaklines=true,
    showstringspaces=false
}
\definecolor{darkblue}{rgb}{0, 0, 0.5}
\hypersetup{colorlinks=true, citecolor=darkblue, linkcolor=darkblue, urlcolor=darkblue}
\usepackage{microtype}
\usepackage{hyperref}
\usepackage{url}
\usepackage{booktabs}
\usepackage{graphicx}
\usepackage{booktabs}
\usepackage{subcaption}
\usepackage{multirow}
\usepackage{booktabs}
\usepackage{booktabs}
\usepackage{multirow}
\usepackage{listings}
\usepackage{xcolor}
\usepackage{float}
\usepackage{graphicx} 

\usepackage{makecell}

\usepackage[utf8]{inputenc}
\usepackage{todonotes}

\usepackage{microtype}

\usepackage{inconsolata}

\usepackage{graphicx}

\usepackage{titletoc}

\titlecontents{section}[2.3em]
  {\addvspace{0.5pc}\bfseries} 
  {\contentslabel{2.3em}}    
  {\hspace*{-2.3em}}         
  {} 

\titlecontents{subsection}[3.5em] 
  {\addvspace{0.1pc}}            
  {\contentslabel{2.3em}}    
  {\hspace*{-2.3em}}         
  {} 

\title{Reduced Matrix Multiplication: \\Input-Adaptive Matrix-Product Reduction for LLM Inference}

\author{
  Zixuan Lan\\
  University of Chicago\\
  {\tt\small zixuanlan@uchicago.edu}
  \And
  Yanhong Li\\
  Independent Researcher\\
  {\tt\small yanhong.lbh@gmail.com}
  \And
  Jiawei Zhou\\
  Stony Brook University\\
  {\tt\small jiawei.zhou.1@stonybrook.edu}
}

\newcommand{\name}{RMM}
\begin{document}
\maketitle
\begin{abstract}
Transformer-based language models achieve strong performance but incur substantial inference cost due to repeated high-dimensional matrix multiplications. We propose Reduced Matrix Multiplication (RMM), a training-free, input-adaptive inference method that reduces Transformer matrix products by selecting informative slices along their contraction dimensions, without modifying model weights. Under a simple retention-ratio control, RMM provides a smooth and predictable accuracy–efficiency trade-off.Across language models ranging from 1B to 70B parameters, we find that reduction tolerance depends on the model family, task, component, and retention ratio, although it often improves with model scale. Under moderate reduction, RMM remains robust across the evaluated discriminative, autoregressive-generation, and long-context settings. We further show that the same principle extends to multimodal vision–language inference. Mechanistic ablations reveal a structural asymmetry within Transformers: attention-side computations are substantially more reducible than MLP components. Finally, wall-clock benchmarks with custom kernels on an NVIDIA A100 show that these computational savings can translate into practical runtime gains, especially at longer sequence lengths. Together, these results position RMM as a scalable direction for input-adaptive inference-time optimization. Code and project resources will be available at
\url{https://github.com/Zesearch/rmm-llm}.

\end{abstract}

\section{Introduction}
\label{sec:introduction}

Transformer models~\citep{vaswani2017attention} continue to improve with scale~\citep{kaplan2020scalinglawsneurallanguage}, but their inference cost also grows rapidly. As model size increases, efficient inference becomes increasingly important in practice. A major source of this cost is the repeated high-dimensional matrix products in attention and feed-forward layers. This raises a basic question: during inference, do all indices along the shared multiplication axes of these matrix products need to be evaluated for every input, or can part of this computation be reduced adaptively while preserving model behavior?

Prior work has explored redundant computation in Transformer inference from several angles~\citep{Liu_2021, peng2023structuredpruningselfsupervisedpretrained, Sajjad_2023, liu2023winner}. 
One line of work simplifies model computation through structured pruning, low-rank approximation, or dimension reduction~\citep{sun2024simpleeffectivepruningapproach, ma2023llmprunerstructuralpruninglarge, frantar2023sparsegptmassivelanguagemodels, ashkboos2024slicegptcompresslargelanguage, gao2024displlmdimensionindependentstructuralpruning}. 
Another line of work reduces context-level redundancy through token compression, KV-cache management, or decoding-time scheduling~\citep{li2023compressing, pan2024llmlingua2datadistillationefficient, zhang2023h2o, xiaoefficient, fu2025deepthinkconfidence, shah2024flashattention3fastaccurateattention, yuan2025nativesparseattentionhardwarealigned, ICLR2025_0d4d9fc3, li2025text}. 
These methods reduce inference cost by modifying fixed model structures, shortening inputs or caches, or changing execution flow, but they do not directly ask whether the contracted computation inside each Transformer matrix product can be reduced adaptively for the current input. 
A closely related direction exploits activation sparsity by skipping small-magnitude activation entries during inference~\citep{liu2025trainingfreeactivationsparsitylarge, lee2024catscontextuallyawarethresholdingsparsity}. 
Although such methods also use input-dependent activation information, their primary object is the sparsification of activation tensors or hidden states. 
Our focus is different: rather than sparsifying hidden states themselves, we reduce the shared multiplication axis of each matrix product. 
Depending on the operation, this axis may correspond to hidden channels in linear or MLP projections, attention-head feature dimensions in attention-score computation, or token positions in the attention-value product. 
This framing distinguishes {\name} from fixed component pruning, activation-state sparsification, and token pruning: the goal is to reduce the contracted computation performed by each matrix product at inference time.

In this work, we propose \emph{Reduced Matrix Multiplication} ({\name}), a training-free, input-adaptive method for Transformer inference. Rather than executing the full matrix products in attention and MLP layers, {\name} dynamically selects and computes informative indices along the shared multiplication axis of each product, without modifying model weights. Beyond acceleration, {\name} provides a controllable way to reduce computation through a simple retention ratio, enabling a systematic study of redundancy in Transformer inference. Across models ranging from 1B to 70B parameters and diverse downstream tasks, we observe a general trend that larger models tolerate more aggressive reduction, although reduction robustness remains model- and task-dependent. We further observe a clear structural asymmetry within Transformers: attention-side computations are substantially more reducible, whereas MLP components are much more sensitive to reduction. We also show that {\name} extends to multimodal vision--language inference, and implement Triton kernels to verify that the reduced matrix products can translate into practical wall-clock speedups.

\section{Related Work}

Prior work has shown that Transformer inference contains substantial redundancy across neurons, layers, attention heads, and larger structural units: models can often tolerate pruning or reduction while preserving performance~\citep{peng2023structuredpruningselfsupervisedpretrained, Sajjad_2023}. These findings suggest that not all inference-time computation is equally indispensable.

One major direction reduces computation by modifying or compressing model structure, including structured pruning, unstructured pruning, low-rank approximation, and dimension reduction~\citep{sun2024simpleeffectivepruningapproach, ma2023llmprunerstructuralpruninglarge, frantar2023sparsegptmassivelanguagemodels, ashkboos2024slicegptcompresslargelanguage, gao2024displlmdimensionindependentstructuralpruning}. Such methods typically determine the reduction pattern before inference and operate on a fixed model structure, answering which parts of the model can be removed or compressed in advance rather than whether computation should vary with the current input.

Another direction reduces redundancy in the input context, cache, attention pattern, or execution process through token compression, KV-cache management, decoding-time skipping, sparse attention, efficient attention kernels, or scheduling optimization~\citep{li2023compressing, pan2024llmlingua2datadistillationefficient, zhang2023h2o, xiaoefficient, shah2024flashattention3fastaccurateattention, yuan2025nativesparseattentionhardwarealigned}. These methods can reduce latency or memory overhead, especially in long-context settings, but they mainly act on inputs, cached tokens, attention patterns, or execution flow rather than directly reducing the high-dimensional matrix products inside attention and feed-forward layers.

Closely related to our work are training-free activation-sparsity methods, including TEAL~\citep{liu2025trainingfreeactivationsparsitylarge} and CATS~\citep{lee2024catscontextuallyawarethresholdingsparsity}, which skip or mask low-magnitude activation entries during inference. These methods are also input-dependent and often training-free. For linear and MLP projections, {\name} shares a related mechanism with activation sparsity, since both use the current activations to retain a subset of input dimensions. The main distinction lies in formulation and scope: activation-sparsity methods define sparsity over activation entries supplied to projection layers, whereas {\name} defines reduction over the shared contraction axis of a general matrix product. This matrix-product formulation applies not only to linear and MLP projections, but also to attention-internal products such as $QK^\top$ and $PV$, where the contracted axes correspond to attention-head feature dimensions and token positions, respectively. Thus, {\name} overlaps with activation sparsity in the projection setting while extending the same reduction principle to a broader set of Transformer matrix products.

Our work is also related to classical randomized approximate matrix multiplication~\citep{drineas2006fast}, which approximates a matrix product by sampling column-row pairs according to importance distributions. Such estimators can in principle be used within a single Transformer forward pass. However, under a fixed reduction budget, stochastic sampling introduces per-instance approximation variance; reducing this variance generally requires more sampled pairs, which weakens the achievable computational savings. {\name} adopts the same matrix-product perspective while using deterministic, activation-aware index selection along the shared multiplication axis, yielding predictable reduced products whose effects we evaluate across Transformer components. Overall, {\name} takes a direct computational perspective: it reduces matrix products themselves.

\section{Methodology}
\label{sec:method}

\subsection{Preliminaries}

\begin{figure*}[t]
    \centering
    \includegraphics[width=\linewidth]{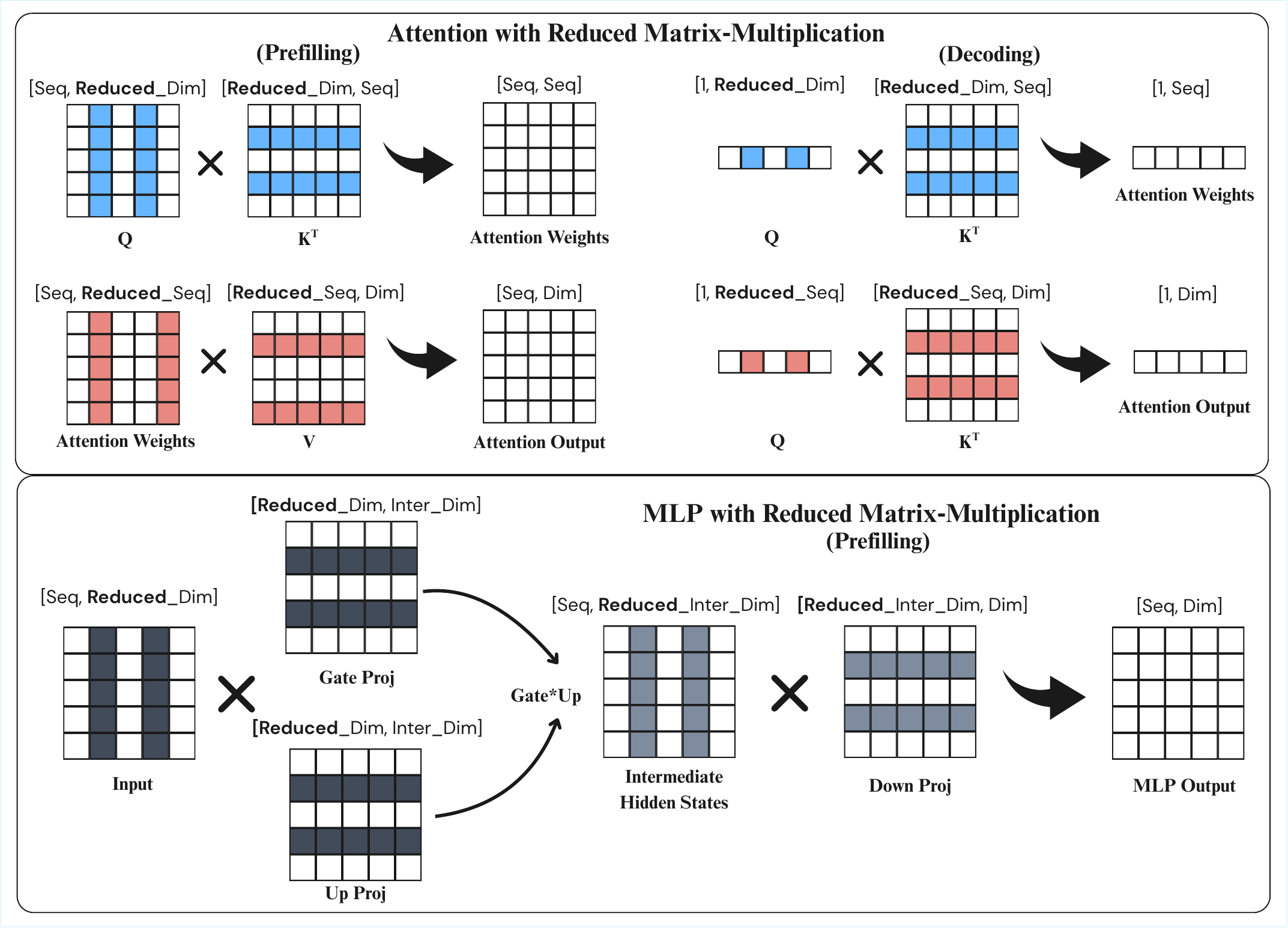}
     \vspace{-2.0em}
    \caption{Application of {\name} in major computations of Transformer language models.}
    \label{fig:Method}
\end{figure*}

Transformer inference is mainly composed of matrix multiplications in self-attention and feed-forward networks (MLPs). Let the input hidden states at layer \(l\) be \(X^{(l)} \in \mathbb{R}^{L \times d}\), where \(L\) is the sequence length and \(d\) is the hidden dimension. In self-attention, the input is linearly projected to queries, keys, and values as \(Q = X^{(l)}W_Q\), \(K = X^{(l)}W_K\), and \(V = X^{(l)}W_V\), where \(W_Q, W_K, W_V \in \mathbb{R}^{d \times d_h}\), and \(d_h\) is the feature dimension of each attention head. Attention scores are computed by \(QK^\top\), and the output is obtained by \(\mathrm{softmax}(QK^\top)V\). Similarly, the MLP block consists of large linear transformations between activations and weight matrices. Although these computations arise in different modules, their core can be written in the unified form \(Y = AB\), where \(A \in \mathbb{R}^{n \times d}\) denotes the activation matrix determined by the current input, \(B \in \mathbb{R}^{d \times m}\) denotes a weight matrix or intermediate representation, and \(Y \in \mathbb{R}^{n \times m}\) is the output.
\subsection{Reduced Matrix Multiplication}
\label{sec:rmm_problem}

Classical approximate matrix multiplication establishes that the product $AB$ can be approximated via Monte Carlo sampling: one repeatedly draws column-row pairs from the shared dimension according to a probability distribution derived from both matrices, rescales them, and averages over independent draws to obtain a low-error estimate in expectation~\citep{drineas2006fast}. However, Transformer inference performs only a single forward pass per input, leaving no opportunity to average over repeated samples, so such randomized estimators cannot be directly applied with reliable per-instance quality. This motivates a deterministic, input-adaptive approach. Based on the unified matrix multiplication form above, we define Reduced Matrix Multiplication({\name}) as follows. For a matrix product \(Y = AB\), where \(A \in \mathbb{R}^{n \times d}\) and \(B \in \mathbb{R}^{d \times m}\), {\name} selects an index set \(\mathcal{I} \subseteq [d]\) with \(|\mathcal{I}|=\lceil \rho d\rceil\), where \(\rho\in(0,1]\) is a user-controlled retention ratio, and computes
\[
\mathrm{RMM}_\rho(A,B) \triangleq A_{:,\mathcal{I}}\, B_{\mathcal{I},:}.
\]

\paragraph{Activation-aware dimension selection.}
To select \(\mathcal{I}\), we assign each feature dimension \(j\in[d]\) an importance score \(s_j \triangleq \|A_{:,j}\|_2\), which measures the magnitude of feature \(j\) under the current input. Given a retention ratio \(\rho\), we select \(\mathcal{I}=\operatorname{TopK}(\{s_j\}_{j=1}^d,\lceil \rho d\rceil)\). This procedure is fully deterministic. Since the selection depends on the current activations, the resulting subspace may vary across inputs, layers, attention heads, and decoding steps. {\name} adapts its computation to each input by selecting dimensions based on current activation magnitudes. This choice is theoretically grounded: we prove in Appendix~\ref{app:minimax} that TopK selection by column norm is minimax optimal, minimizing the worst-case approximation error over all possible $B$ at any given retention budget. Our ablation experiments (Section~\ref{sec:ablation and me}) further confirm that both dynamic selection and activation-aware scoring are essential to the effectiveness of {\name}.

\subsection{Applying RMM to Attention and MLPs}
\label{sec:rmm_apply}

We apply {\name} to Attention and MLP layers as follows (Figure~\ref{fig:Method}).
\paragraph{Attention.}
Consider a single attention head with queries \(Q\in\mathbb{R}^{L_q\times d_h}\), keys \(K\in\mathbb{R}^{L_k\times d_h}\), and values \(V\in\mathbb{R}^{L_k\times d_h}\). We compute feature scores \(s_j=\|Q_{:,j}\|_2\) and select \(\mathcal{I}=\operatorname{TopK}(\{s_j\}_{j=1}^{d_h},\lceil\rho_d d_h\rceil)\), yielding reduced attention scores $\widetilde{S}=\frac{1}{\sqrt{d_h}}\,Q_{:,\mathcal{I}}K_{:,\mathcal{I}}^\top$.
Attention weights are then obtained as \(P=\mathrm{softmax}(\widetilde{S}+M)\), where \(M\) denotes the optional causal or attention mask. For grouped-query attention, dimension selection is performed per head on \(Q\), with the corresponding dimensions gathered from the shared \(K\) and \(V\) tensors. We further optionally sparsify the attention--value multiplication \(PV\) over the token dimension by computing token scores \(a_t=\|P_{:,t}\|_2\), selecting \(\mathcal{T}=\operatorname{TopK}(\{a_t\}_{t=1}^{L_k},\lceil\rho_t L_k\rceil)\), and evaluating \(\widetilde{O}=P_{:,\mathcal{T}}V_{\mathcal{T},:}\).

\paragraph{MLP and linear projections.}
Given activations \(X\in\mathbb{R}^{L\times d}\) and weights \(W\in\mathbb{R}^{d\times d'}\), we compute feature scores \(s_j=\|X_{:,j}\|_2\), select \(\mathcal{I}=\operatorname{TopK}(\{s_j\}_{j=1}^{d},\lceil\rho_d d\rceil)\), and evaluate \(\widetilde{Y}=X_{:,\mathcal{I}}W_{\mathcal{I},:}\). The same rule applies to linear projections as well as to feed-forward layers.

\subsection{Complexity}
\label{sec:rmm_complexity}

For a matrix multiplication \(A\in\mathbb{R}^{n\times d}\) and \(B\in\mathbb{R}^{d\times m}\), dense computation costs \(O(ndm)\). With feature retention ratio \(\rho_d\), {\name} evaluates \(A_{:,\mathcal{I}}B_{\mathcal{I},:}\) with \(|\mathcal{I}|=\lceil \rho_d d\rceil\), reducing the cost to \(O(n\rho_d d\,m)\). In attention, reducing \(QK^\top\) over the head dimension lowers the cost from \(O(L_qL_kd_h)\) to \(O(L_qL_k\rho_d d_h)\). If token selection is also applied to the attention--value product, the cost of \(PV\) is reduced from \(O(L_qL_kd_h)\) to \(O(L_q\rho_tL_kd_h)\). {\name} additionally requires computing feature scores and selecting top-\(k\) indices: computing \(s_j=\|A_{:,j}\|_2\) costs \(O(nd)\), while top-\(k\) selection over \(d\) (or \(L_k\)) is a lightweight vector-level operation. In practice, these overheads are small relative to the dense matrix multiplications that {\name} replaces.

\subsection{Theoretical Justification}
\label{sec:rmm_theory}

We provide a concise theoretical justification for the
activation-aware selection rule used by {\name}. Consider a matrix
product $AB$, where $A\in\mathbb{R}^{n\times d}$ and
$B\in\mathbb{R}^{d\times m}$. The product can be decomposed into
rank-one terms along its shared multiplication axis:
\begin{equation}
AB=\sum_{j=1}^{d}A_{:,j}B_{j,:}.
\end{equation}
For a retained index set $\mathcal{I}$, {\name} discards the terms
indexed by $\bar{\mathcal{I}}=[d]\setminus\mathcal{I}$. The resulting
approximation error satisfies
\begin{equation}
\begin{aligned}
AB-A_{:,\mathcal{I}}B_{\mathcal{I},:}
&=\sum_{j\in\bar{\mathcal{I}}}A_{:,j}B_{j,:},\\
\left\|AB-A_{:,\mathcal{I}}B_{\mathcal{I},:}\right\|_F
&\leq
\sum_{j\in\bar{\mathcal{I}}}
\left\|A_{:,j}\right\|_2
\left\|B_{j,:}\right\|_2.
\end{aligned}
\label{eq:main_error_bound}
\end{equation}

This decomposition motivates selecting dimensions according to the
observed activation matrix. Let
$\alpha_j=\|A_{:,j}\|_2$ and
$b_j=\|B_{j,:}\|_2$. For selection rules that depend only on $A$,
we consider the worst-case value of the upper bound in
Eq.~\eqref{eq:main_error_bound} under the normalized constraint
$\sum_j b_j^2\leq 1$. For any fixed $\mathcal{I}$,
Cauchy--Schwarz gives
\begin{equation}
\max_{\substack{b_j\geq 0\\\sum_j b_j^2\leq 1}}
\sum_{j\in\bar{\mathcal{I}}}\alpha_jb_j
=
\left(
\sum_{j\in\bar{\mathcal{I}}}\alpha_j^2
\right)^{1/2}
=
\left\|A_{:,\bar{\mathcal{I}}}\right\|_F.
\label{eq:main_minimax}
\end{equation}
The equality is attained when the unknown row energies are
proportional to the discarded activation-column norms. Therefore,
under a fixed budget $|\mathcal{I}|=k$, minimizing the worst-case
upper-bound surrogate is equivalent to minimizing the residual
activation energy
$\sum_{j\notin\mathcal{I}}\|A_{:,j}\|_2^2$. The optimal solution is
precisely to retain the $k$ columns of $A$ with the largest
$\ell_2$ norms. Thus, the TopK rule used by {\name} is minimax
optimal for the stated activation-only surrogate.

The same analysis also provides an energy-based interpretation of
the retention ratio. Applying Cauchy--Schwarz to
Eq.~\eqref{eq:main_error_bound} yields
\begin{equation}
\left\|AB-A_{:,\mathcal{I}}B_{\mathcal{I},:}\right\|_F
\leq
\left\|A_{:,\bar{\mathcal{I}}}\right\|_F
\left\|B_{\bar{\mathcal{I}},:}\right\|_F.
\label{eq:main_factorized_bound}
\end{equation}
Defining the discarded energy ratios
\begin{equation}
\epsilon_A(\rho)=
\frac{\|A_{:,\bar{\mathcal{I}}}\|_F^2}{\|A\|_F^2},
\qquad
\epsilon_B(\rho)=
\frac{\|B_{\bar{\mathcal{I}},:}\|_F^2}{\|B\|_F^2},
\end{equation}
we obtain the normalized approximation bound
\begin{equation}
\frac{
\|AB-A_{:,\mathcal{I}}B_{\mathcal{I},:}\|_F
}{
\|A\|_F\|B\|_F
}
\leq
\sqrt{\epsilon_A(\rho)\epsilon_B(\rho)}.
\label{eq:main_normalized_bound}
\end{equation}
TopK selection minimizes the $A$-side residual energy among all
index sets with the same cardinality. Consequently, when activation
energy is concentrated in a subset of contraction dimensions,
{\name} can retain most of the activation energy while evaluating a
smaller matrix product. Increasing $\rho$ monotonically decreases
this residual, providing a direct theoretical connection between the
retention ratio and approximation quality.

This argument applies uniformly to the contraction axes targeted by
{\name}: input channels in linear and MLP projections, head-feature
dimensions in $QK^\top$, and token positions in $PV$. The analysis
justifies the selection rule rather than guaranteeing downstream
task accuracy, which can additionally depend on cancellation,
nonlinearities, and error propagation across layers. The complete
theorem, proofs, and further discussion are provided in
Appendix~\ref{app:theory}.

\section{Experimental Setup}
\label{sec:experiments setup}

\begin{table*}[t]
\centering
\small
\renewcommand{\arraystretch}{1.05}
\begin{tabular*}{\textwidth}{
    @{\extracolsep{\fill}}
    lcccccc
    @{}
}
\toprule
Method (RR $=0.5$)
& ARC-C & ARC-E & COPA & PIQA & CommQA & Avg. \\
\midrule
Full model (RR $=1.0$)
& 49.5 & 76.3 & 77.2 & 79.9 & 66.0 & 69.8 \\

\name{}
& 36.8 & 63.0 & 70.6 & 76.6 & 51.9 & 59.8 \\

SparseGPT
& 31.4 & 64.2 & 70.4 & 71.0 & 43.4 & 56.1 \\

Wanda
& 28.1 & 60.7 & 67.2 & 68.9 & 38.4 & 52.7 \\

SliceGPT
& 20.7 & 31.8 & 56.0 & 53.4 & 23.1 & 37.0 \\

Magnitude
& 22.7 & 33.7 & 57.2 & 57.6 & 25.0 & 39.3 \\
\bottomrule
\end{tabular*}

\vspace{-0.6em}
\caption{
Zero-shot QA performance on LLaMA~3.1~8B under a fixed retention
ratio ($\mathrm{RR}=0.5$). All pruning methods operate at the same
retention ratio, while the full model ($\mathrm{RR}=1.0$) is shown
for reference.
}
\label{tab:llama8b_qa_other_methods}
\end{table*}

\begin{table*}[t]
\vspace{-0.6em}
\centering
\small
\renewcommand{\arraystretch}{1.0}
\begin{tabular*}{\textwidth}{
    @{\extracolsep{\fill}}
    llcccccc
    @{}
}
\toprule
Model & Method & RR
& ROUGE-1 & ROUGE-2 & ROUGE-L & ROUGE-Lsum & BERTScore \\
\midrule
\multirow{9}{*}{LLaMA~3.1~8B}
& Full model & 1.0
& 37.4 & 15.6 & 24.3 & 31.3 & 86.8 \\

& \name{} & 0.8
& 37.5 & 15.7 & 24.2 & 31.4 & 86.7 \\

& \name{} & 0.5
& \textbf{34.2} & \textbf{13.6} & \textbf{22.0}
& \textbf{28.7} & \textbf{85.8} \\

& Static & 0.8
& 37.4 & 15.6 & 24.2 & 31.2 & 86.7 \\

& Static & 0.5
& 28.0 & 9.9 & 19.3 & 24.3 & 84.0 \\

& Random & 0.8
& 6.9 & 0.6 & 6.2 & 6.7 & 78.0 \\

& Random & 0.5
& 5.7 & 0.2 & 5.2 & 5.5 & 81.4 \\

& H2O & 0.8
& 24.4 & 9.3 & 16.3 & 21.9 & 82.7 \\

& H2O & 0.5
& 24.4 & 9.3 & 16.3 & 21.9 & 82.7 \\
\bottomrule
\end{tabular*}

\vspace{-0.8em}
\caption{
Abstractive summarization performance on CNN/DailyMail using
LLaMA~3.1~8B under different pruning strategies. RMM, static
pruning, and random pruning are evaluated at the indicated retention
ratios. H2O uses a fixed token budget and therefore yields identical
results across the two displayed settings.
}
\label{tab:summary_results}
\end{table*}

\paragraph{Overview.}
Our experiments are structured to validate both the effectiveness of dynamic, activation-aware pruning and the empirical insights it enables under controlled retention ratios. We first establish the necessity of dynamic selection by comparing {\name} against representative weight-level static pruning methods under matched sparsity budgets on LLaMA~3.1~8B. We then sweep retention ratios across model scales from 1B to 70B to examine how performance degrades as computation is reduced and how redundancy varies with scale. Next, we test robustness under more realistic inference settings, including autoregressive generation and long-context reasoning. We further perform component-wise pruning analyses on attention and MLP blocks to identify which computations are more redundant and which are more critical. Finally, we evaluate generalization on a vision--language model and report wall-clock latency on an NVIDIA A100 GPU to verify that the computational savings translate into actual runtime improvements. Additional experiments, including comparisons with TEAL, compute-normalized component analysis, evaluations on additional VLM backbones, INT8 compatibility, and LLaMA-70B latency, are provided in Appendix~\ref{sec:supp-results}.

\paragraph{Models and tasks}
We evaluate {\name} on a wide spectrum of pre-trained LLMs, including Llama~3.1~70B, Llama~3.1~8B, Llama~3.2~3B, Llama~3.2~1.5B \citep{grattafiori2024llama3herdmodels}, Qwen3~32B and Qwen~3.1~7B \citep{yang2025qwen3technicalreport}, and Qwen2.5-VL-7B-Instruct \citep{bai2025qwen25vltechnicalreport}.
The benchmarks span multiple capabilities: (i) general QA and reasoning, including \textsc{Copa} \citep{gordon-etal-2012-semeval}, \textsc{PiQA} \citep{bisk2020piqa}, \textsc{CommonsenseQA} \citep{talmor2019commonsenseqaquestionansweringchallenge}, \textsc{ARC-Easy}, \textsc{ARC-Challenge} \citep{clark2018thinksolvedquestionanswering}, and \textsc{MMLU} \citep{hendrycks2021measuringmassivemultitasklanguage}; (ii) language modeling evaluation on \textsc{WikiText} \citep{merity2016pointersentinelmixturemodels} and \textsc{BookCorpus} \citep{zhu2015aligning}; (iii) mathematics and coding tasks, including \textsc{GSM8K} \citep{cobbe2021trainingverifierssolvemath} and \textsc{HumanEval} \citep{chen2021evaluatinglargelanguagemodels}; (iv) long-context reasoning, including \textsc{Ruler-CWE} and \textsc{Ruler-Hotpot} \citep{hsieh2024rulerwhatsrealcontext}; (v) summarization on \textsc{CNN/DailyMail} \citep{nallapati-etal-2016-abstractive}; and (vi) vision--language tasks, including \textsc{POPE} \citep{li2023evaluatingobjecthallucinationlarge}, Blink Art Style, Blink Forensic Detection, and Blink Counting \citep{fu2024blinkmultimodallargelanguage}.
All tasks are evaluated in the zero-shot setting without task-specific fine-tuning. For a more controlled analysis, the main-paper results apply reduction to attention-side matrix multiplications. Full detailed result tables are provided in the appendix.

\paragraph{Baselines.}
We compare \name{} against representative pruning and inference-time optimization baselines. \textbf{Static pruning methods} include SparseGPT~\citep{frantar2023sparsegptmassivelanguagemodels}, Wanda~\citep{sun2024simpleeffectivepruningapproach}, SliceGPT~\citep{ashkboos2024slicegptcompresslargelanguage}, and magnitude pruning~\citep{han2015learningweightsconnectionsefficient}. For \textbf{dynamic inference-time baselines}, we include H2O~\citep{zhang2023h2o}, which dynamically manages the KV cache during decoding but does not modify the feature dimensions involved in matrix multiplications. As control baselines, we also include a \textbf{static variant of \name{}}, which selects a fixed subset of feature dimensions during prefill based on activation statistics and reuses the same subset for all subsequent tokens. And \textbf{random pruning} retains feature dimensions uniformly at random at each decoding step under the same sparsity budget as \name{}. Together, these baselines allow us to compare \name{} against static pruning, non-adaptive activation-based selection, random selection, and dynamic methods operating at different levels of the inference stack.

\section{Main Results}
\label{sec:main_results}

\subsection{Controlled comparison of Pruning behavior}

We first compare pruning strategies on a single model under a controlled setting to isolate the effect of different pruning strategies. All experiments in this stage are conducted on \textbf{LLaMA~3.1~8B} with a fixed retention ratio of $\mathrm{RR}=0.5$. We consider two inference settings:  

\paragraph{Discriminative question answering}

We first evaluate zero-shot QA performance on standard reasoning benchmarks (Table~\ref{tab:llama8b_qa_other_methods}). Performance is measured by comparing the log-likelihoods of candidate answers. We compare \name{} against representative static pruning baselines, including SparseGPT, Wanda, SliceGPT, and magnitude pruning. \name{} achieves the best average accuracy among all pruning methods and shows more consistent degradation across tasks. In contrast, static baselines suffer substantially larger and less uniform drops across benchmarks. These results suggest that fixed, non-adaptive pruning decisions are insufficient to maintain stable performance in practical downstream tasks.

\paragraph{Abstractive summarization}

We further evaluate pruning strategies on the abstractive summarization task, a token-by-token generation setting (Table~\ref{tab:summary_results}). In addition to the static baselines, we include two dynamic baselines in this setting: (i) random pruning, which selects retained dimensions uniformly at random at each decoding step, and (ii) H2O, which dynamically manages the KV cache at the token level. All methods are evaluated under identical retention ratios. At $\mathrm{RR}=0.8$, \name{} remains close to the full model while substantially reducing computation. When the retention ratio is reduced to $\mathrm{RR}=0.5$, \name{} continues to outperform all baselines by a clear margin. Static pruning methods degrade rapidly in generation quality, while random pruning, despite being dynamic, fails to maintain coherent and semantically consistent summaries. H2O performs better than static pruning in this setting, but remains consistently weaker than activation-aware matrix-level pruning.

\paragraph{Summary}

Under a fixed model and retention ratio, the relative behavior of pruning strategies differs markedly between discriminative and generative inference settings. Static and activation-agnostic methods exhibit less stable degradation, while \name{} maintains a clear advantage across both settings. Additional detailed results are reported in Table~\ref{tab:summary_results_full}.

\subsection{Scaling across models and retention ratios}

We next examine how pruning tolerance changes with model scale under different retention ratios. We evaluate {\name} on a range of model sizes. For each model, we sweep the retention ratio over $\mathrm{RR} \in \{0.9, 0.8, 0.7, 0.6, 0.5\}$. We use the same zero-shot discriminative evaluation setup, covering commonsense and reasoning benchmarks as well as more structured tasks including \textsc{GSM8K}, \textsc{MMLU}, and \textsc{HumanEval}.

Table~\ref{tab:main_results} summarizes the results across models and retention ratios. A complementary is provided in Figure~\ref{fig:retention_robustness} in the appendix. At matched retention levels, larger models generally retain stronger performance under moderate reduction, although the trend varies across model families and tasks. For example, at $\mathrm{RR}=0.8$, LLaMA~3.1~70B remains close to the full model on most benchmarks, whereas smaller models show more pronounced degradation, especially on challenging tasks such as \textsc{GSM8K} and \textsc{HumanEval}. As the retention ratio decreases further, all models degrade, but smaller models exhibit an earlier performance inflection---with noticeable drops already at $\mathrm{RR}=0.7$---while larger models degrade more gradually and maintain higher absolute accuracy even under aggressive pruning ($\mathrm{RR}\leq0.6$). Overall, the results suggest a broad scaling trend in which larger models often tolerate stronger reduction, while also showing that robustness remains model- and task-dependent. Additional results about 1B and 3B models are shown in Figure \ref{fig:1B3B} in Appendix~\ref{sec:supp-results}.

\begin{table*}[t]
\centering
\scriptsize
\renewcommand{\arraystretch}{0.85}
\setlength{\tabcolsep}{3pt}
\begin{tabularx}{\textwidth}{l l c *{8}{X}}
\toprule
\textbf{Model} & \textbf{Method} & \text{RR} & \textbf{Copa} & \textbf{ARC-C} & \textbf{ARC-E} & \textbf{PiQA} & \textbf{CommQA} & \textbf{GSM8K} & \textbf{MMLU} & \textbf{HumanEval} \\
\midrule
\multirow{6}{*}{Qwen3.1 7B}
  & Baseline & --   & 72.8 & 39.8  & 69.8 & 72.7 & 47.6 & 39.9 & 55.5 & 40.2 \\
  & {\name}    & 0.9  & 66.0 & 33.8 & 64.6 & 69.4 & 46.7 & 24.9 & 52.7 & 39.6 \\
  & {\name}   & 0.8  & 64.0 & 29.8 & 57.2 & 67.2 & 47.6 & 17.6 & 47.3 & 38.4 \\
  & {\name}    & 0.7  & 60.8 & 28.1 & 46.8 & 63.5 & 39.6 & 5.8  & 33.6 & 31.1 \\
  & {\name}   & 0.6  & 58.4 & 25.4 & 37.0 & 59.6 & 29.9 & 2.5  & 26.0 & 20.7 \\
  & {\name}   & 0.5  & 49.4 & 20.7 & 32.1 & 53.2 & 24.5 & 1.7  & 23.8 & 9.8  \\
\midrule
\multirow{6}{*}{Llama3.1 8B}
  & Baseline & --   & 77.2 & 49.5  & 76.3 & 79.9 & 66.0 & 26.2 & 63.5 & 35.4 \\
  & {\name}   & 0.9  & 76.6 & 48.2 & 75.3 & 79.2 & 65.4 & 24.6 & 62.2 & 35.4 \\
  & {\name}    & 0.8  & 77.2 & 47.5 & 75.1 & 79.1 & 64.7 & 23.7 & 60.3 & 34.8 \\
  & {\name}    & 0.7  & 77.0 & 46.8 & 72.8 & 77.5 & 62.7 & 23.2 & 55.2 & 32.3 \\
  & {\name}   & 0.6  & 73.4 & 37.5 & 68.6 & 77.5 & 59.4 & 14.9 & 38.6 & 26.2 \\
  & {\name}    & 0.5  & 70.6 & 36.8 & 63.0 & 76.7 & 51.9 & 5.9  & 24.8 & 23.2 \\
\midrule
\multirow{6}{*}{Qwen3 32B}
  & Baseline & --   & 81.4 & 57.9 & 78.3 & 80.9 & 61.6 & 62.6 & 80.8 & 37.8 \\
  & {\name}    & 0.9  & 83.6 & 55.2 & 76.1 & 80.7 & 62.2 & 62.7 & 80.0 & 40.2 \\
  & {\name}   & 0.8  & 83.2 & 51.8 & 72.6 & 80.2 & 61.0 & 58.0 & 78.6 & 42.1 \\
  & {\name}    & 0.7  & 82.6 & 48.5 & 69.5 & 80.6 & 60.6 & 55.1 & 77.5 & 42.7 \\
  & {\name}   & 0.6  & 82.6 & 50.8 & 70.5 & 80.4 & 58.7 & 50.9 & 73.2 & 45.1 \\
  & {\name}   & 0.5  & 82.2 & 46.2 & 67.2 & 77.8 & 54.9 & 39.9 & 65.1 & 46.1 \\
\midrule
\multirow{6}{*}{Llama3.1 70B}
  & Baseline & --   & 84.4 & 56.2 & 78.3 & 83.2 & 58.0 & 53.7 & 75.3 & 51.2 \\
  & {\name}    & 0.9  & 84.4 & 54.5 & 78.8 & 83.5 & 58.0 & 51.5 & 75.0 & 53.7 \\
  & {\name}    & 0.8  & 84.6 & 56.9 & 76.8 & 82.6 & 59.7 & 48.1 & 72.6 & 47.0 \\
  & {\name}   & 0.7  & 81.4 & 53.2 & 74.7 & 82.5 & 59.9 & 42.8 & 67.0 & 39.6 \\
  & {\name}    & 0.6  & 76.6 & 50.5 & 74.9 & 77.7 & 60.4 & 34.5 & 53.8 & 34.8 \\
  & {\name}   & 0.5  & 70.2 & 41.5 & 64.2 & 73.6 & 56.8 & 19.9 & 29.7 & 18.9 \\
\bottomrule
\end{tabularx}
\vspace{-0.8em}
\caption{Performance comparison across different models and retention ratios. See
Figure~\ref{fig:retention_robustness} for visualization.}
\label{tab:main_results}
\end{table*}

\subsection{Stability and robustness under generation and long-context settings}

While the previous sections focus on discriminative benchmarks, practical inference also requires stable long-form generation and reliable reasoning over extended contexts. We therefore further evaluate {\name} under both generative and long-context settings to assess whether dynamic pruning remains stable beyond discriminative task evaluation.

\begin{table}[h]
\centering
\footnotesize
\renewcommand{\arraystretch}{0.92}
\setlength{\tabcolsep}{8pt}
\begin{tabular}{llccc}
\toprule
Task & Cond. & 5K & 15K & 30K \\
\midrule
\multirow{3}{*}{CWE}
  & Base & 98.2 & 94.0 & 29.6 \\
  & 0.8  & 98.1 & 94.1 & 29.3 \\
  & 0.5  & 98.0 & 94.0 & 28.9 \\
\midrule
\multirow{3}{*}{Hotpot}
  & Base & 53.6 & 56.4 & 51.2 \\
  & 0.8  & 53.6 & 55.8 & 50.8 \\
  & 0.5  & 53.6 & 55.6 & 50.5 \\
\bottomrule
\end{tabular}
\vspace{-0.8em}
\caption{Long-context results on Ruler.}
\label{tab:ruler_results}
\end{table}
\paragraph{Qualitative generation behavior}
We next examine how pruning affects autoregressive generation. Table~\ref{tab:generation_comparison} shows representative outputs from the full model and {\name} under different retention ratios. At $\mathrm{RR}=0.8$, the outputs of {\name} remain largely consistent with those of the full model in both semantic content and overall structure. At $\mathrm{RR}=0.6$, the generated text begins to show simplification and stylistic drift, but remains coherent and semantically aligned with the prompt. In these examples, stronger reduction introduces simplification and stylistic drift rather than an abrupt loss of coherence.

\begin{table*}[t]
\vspace{-0.5em}
\centering
\renewcommand{\arraystretch}{0.8} 
\begin{threeparttable}
\scriptsize 
\begin{tabularx}{\textwidth}{@{}l >{\raggedright\arraybackslash}X 
                                    >{\raggedright\arraybackslash}X 
                                    >{\raggedright\arraybackslash}X 
                                    >{\raggedright\arraybackslash}X@{}}
\toprule
 & \multirow{2}{*}{\textbf{Prompt}} 
 & \multirow{2}{*}{\textbf{Base Model}} 
 & \multicolumn{2}{c}{\textbf{{\name} Inference}} \\
\cmidrule(lr){4-5}
 &  &  & \textbf{RR 0.8} & \textbf{RR 0.6} \\
\midrule

\multirow{2}{*}{Ex.1} 
& The future of artificial intelligence is
& here, and it’s already changing the way we live and work. From self-driving cars to virtual assistants. 
& here, and it’s already changing the way we live and work. From self-driving cars to virtual assistants. 
& here. It’s called ChatGPT. This AI chatbot can write essays, poems, and even code. \\

\midrule

\multirow{2}{*}{Ex.2} 
& Tell me something about Boston.
& Boston is a city in the state of Massachusetts, in the United States of America. 
& Boston is a city in the state of Massachusetts, in the United States of America.  
& I’m a native New Yorker, and I’ve been here for 10 years. \\

\bottomrule
\end{tabularx}
\vspace{-6pt}
\caption{Generation comparison between base(Llama-3.1-8B) and {\name} inference under different retention ratios.}
\label{tab:generation_comparison}
\end{threeparttable}
\vspace{-0.8em}
\end{table*}

\paragraph{Long-context reasoning}
We further evaluate {\name} on long context benchmarks to test whether pruning remains reliable over long sequences. As shown in Table~\ref{tab:ruler_results}, {\name} achieves performance comparable to the full model across all tested context lengths at both $\mathrm{RR}=0.8$ and $\mathrm{RR}=0.5$. We do not observe a systematic increase in degradation as context length grows, suggesting that dynamic pruning does not disproportionately impair long-range dependency modeling within this regime.

\paragraph{Summary}
Across generative and long-context settings, {\name} remains robust across a wide range of retention ratios, with degradation that is smooth rather than abrupt under stronger pruning. These results show that the benefits of dynamic pruning are not limited to discriminative benchmarks, but also extend to more realistic inference scenarios.

\subsection{Vision-language generalization} 

We further evaluate whether {\name} generalizes beyond text-only models by testing it on the vision--language model. As shown in Table~\ref{tab:qwen_pruning}, the core trend observed in text-only models also holds in the multimodal setting: {\name} remains close to the full model under mild pruning and continues to outperform static and random pruning under more aggressive reduction. At $\mathrm{RR}=0.8$, {\name} achieves performance nearly identical to the full model across all benchmarks. At $\mathrm{RR}=0.5$, it still retains strong performance and substantially outperforms both static and random pruning under the same retention ratio.\begin{table}[h]
\centering
\small
\renewcommand{\arraystretch}{0.8}
\setlength{\tabcolsep}{3pt}
\begin{tabular}{l l c c c c}
\toprule
Method & RR & Pope & Art. & Foren. & Count. \\
\midrule
Baseline & -- & 83.7 & 100.0 & 100.0 & 100.0 \\
\midrule
\cellcolor{gray!20}{\name} & \cellcolor{gray!20}0.8 
    & \cellcolor{gray!20}\textbf{82.0} 
    & \cellcolor{gray!20}\textbf{100.0} 
    & \cellcolor{gray!20}\textbf{100.0} 
    & \cellcolor{gray!20}\textbf{100.0} \\
Static & 0.8 & 81.0 & 100.0 & 100.0 & 100.0 \\
Random & 0.8 & 10.3 & 43.6 & 58.3 & 54.2 \\
\midrule
\cellcolor{gray!20}{\name} & \cellcolor{gray!20}0.5 
    & \cellcolor{gray!20}\textbf{67.3} 
    & \cellcolor{gray!20}\textbf{97.4} 
    & \cellcolor{gray!20}\textbf{97.7} 
    & \cellcolor{gray!20}\textbf{99.2} \\
Static & 0.5 & 63.0 & 91.3 & 43.3 & 79.2 \\
Random & 0.5 & 1.3 & 41.9 & 53.0 & 43.3 \\
\bottomrule
\end{tabular}
\vspace{-0.8em}
\caption{Performance on Qwen~2.5-VL-7B.}
\label{tab:qwen_pruning}
\end{table} Figure~\ref{fig:vlmsamples} provides qualitative support by visualizing the attention maps for the first output token. Both the full model and {\name} attend to semantically relevant visual regions, whereas static pruning under-attends to these regions and random pruning produces scattered, less focused patterns. These qualitative observations are consistent with the quantitative results in Table~\ref{tab:qwen_pruning}, showing that {\name} better preserves the visual grounding needed for multimodal inference.

\begin{figure}[t]
    \centering
    \includegraphics[width=\linewidth]{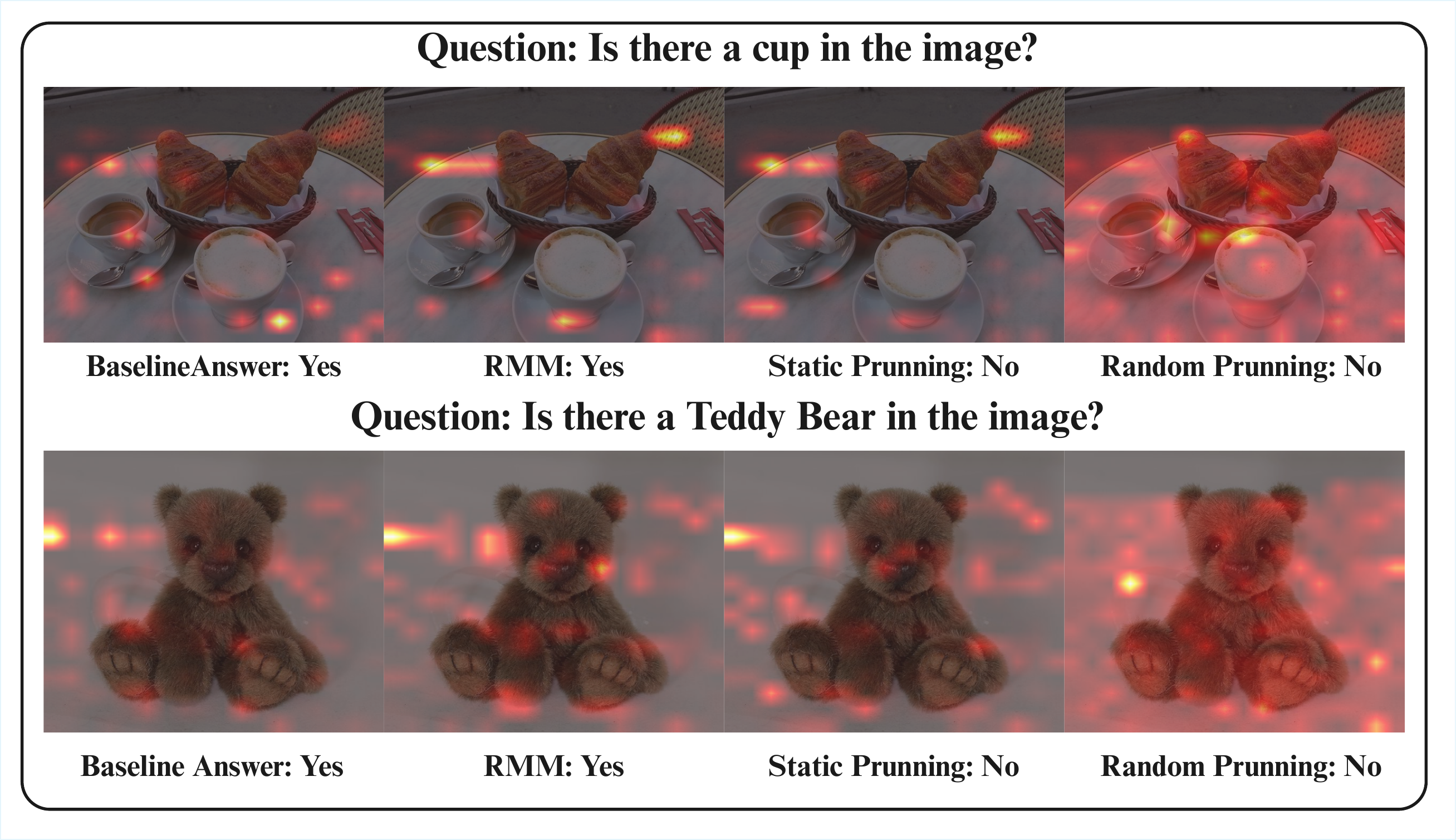}
    \vspace{-1.8em}
    \caption{
First-output-token attention maps on Qwen~2.5-VL-7B at
$\mathrm{RR}=0.5$. {\name} preserves dense-like attention to
relevant objects and correct answers, while static and random
pruning produce less aligned patterns and incorrect answers.
}
    \label{fig:vlmsamples}
\end{figure}
\section{Ablation and Mechanistic Analysis}
\label{sec:ablation and me}

\subsection{Validating the key design choices of \name}
We validate the two key design choices of {\name}: dynamic selection(Table~\ref{tab:summary_results}) and activation-aware scoring(Table~\ref{tab:qwen_pruning}). Rather than relying on a single-task ablation, we draw evidence from comparisons across multiple tasks. The importance of dynamic selection is reflected in the consistent gap between {\name} and its static variant, which fixes the retained feature subset across tokens. It widens substantially at more aggressive retention ratios, as seen in both summarization and vision--language tasks. This suggests that adapting the retained dimensions to input-dependent activation changes is critical for preserving performance under stronger pruning. The importance of activation-aware scoring is supported by comparisons against random pruning. Although random pruning also changes the retained subset dynamically, it performs much worse under the same retention ratios. This shows that the gains of {\name} do not arise merely from changing the retained subset over time, but from selecting informative dimensions according to the current activation pattern. Taken together, these results show that both dynamic selection and activation-aware scoring are essential to the effectiveness of {\name}.

\subsection{Different components Mechanistic Analysis}

We next examine where pruning can be applied most safely within the Transformer. We analyze the sensitivity of different modules to reduction on LLaMA~3.1~8B by separately pruning attention-side components (Q projection, QKV projection, and attention output), MLP-side components (up, gate, and down projections), and their combinations. Due to space limitations, we report the full pruning matrix in Table \ref{tab:comprehensive_pruning} and Figure \ref{fig:component_sensitivity} in Appendix \ref{sec:Ablation Study Details}. \begin{table}[h]
\centering
\small
\setlength{\tabcolsep}{6pt}
\begin{tabular}{ccccc}
\toprule
$L$ & Op. & Dense & RMM & Speedup \\
\midrule
1024 & $QK^\top$ & 0.120 & 0.089 & 1.36$\times$ \\
1024 & $AV$       & 0.065 & 0.039 & 1.67$\times$ \\
\midrule
2048 & $QK^\top$ & 0.433 & 0.336 & 1.29$\times$ \\
2048 & $AV$       & 0.207 & 0.114 & 1.81$\times$ \\
\midrule
4096 & $QK^\top$ & 1.675 & 1.071 & 1.56$\times$ \\
4096 & $AV$       & 0.753 & 0.399 & 1.89$\times$ \\
\bottomrule
\end{tabular}

\vspace{-0.6em}
\caption{GEMM kernel latency (ms).}
\label{tab:gemm_latency}
\vspace{-0.6em}
\end{table} Across all settings, attention-side computations are substantially more robust to pruning than MLP-side components. When pruning only attention-related operations, performance degrades gradually as the retention ratio decreases and remains close to the baseline even at moderate reduction levels. In contrast, pruning MLP components leads to much sharper performance dropping across tasks. In particular, pruning the entire MLP block leads to severe performance collapse even at relatively high retention ratios, showing the structural importance of MLP layers for preserving representation capacity. By contrast, different attention-side components exhibit greater functional redundancy: pruning their combinations leads to more moderate degradation. These results reveal a clear structural asymmetry within the Transformer: attention-side computations contain higher redundancy, whereas MLP components are more rigid and harder to prune. This suggests that practical deployments should prioritize pruning attention modules and apply more conservative reduction to the MLP.

\subsection{Practical Viability: Wall-Clock Latency}

To evaluate whether the computational savings of {\name} translate into practical runtime gains, we measure both kernel-level and end-to-end wall-clock latency on LLaMA~3.1~8B using an NVIDIA A100 GPU with a batch size of 1 and a retention ratio of $\rho=0.8$. \begin{table}[h]
\centering
\small
\setlength{\tabcolsep}{6pt}
\begin{tabular}{cccc}
\toprule
Seq. Len. & Dense & RMM & Speedup \\
\midrule
1024 & 109.39 & 103.91 & 1.05$\times$ \\
2048 & 264.67 & 208.93 & 1.27$\times$ \\
4096 & 661.36 & 473.21 & 1.40$\times$ \\
\bottomrule
\end{tabular}
\vspace{-0.8em}
\caption{End-to-end latency (ms).}
\label{tab:e2e_latency}
\end{table}All latency numbers are averaged over 10 runs. We follow the latency evaluation protocol of \citep{sun2024simpleeffectivepruningapproach}. For kernel-level benchmarks (Table~\ref{tab:gemm_latency}), we measure the full cost of each RMM operation, including norm computation, top-$k$ selection, and the reduced matrix multiplication, to verify that the selection overhead does not offset the computational savings. For end-to-end evaluation (Table~\ref{tab:e2e_latency}), the dense baseline uses HuggingFace \texttt{generate} with SDPA as the attention backend, and the RMM variant replaces the attention kernels with custom Triton implementations. Overall, these results show that the computational savings of {\name} can translate into tangible runtime benefits, especially when sequence lengths are sufficiently large.

\section{Conclusion}
\label{sec:conclusion}

We introduced \emph{Reduced Matrix Multiplication} ({\name}), a
training-free and input-adaptive method that reduces Transformer
inference computation by selecting informative indices along the
shared multiplication axis of each matrix product. Across models
ranging from 1B to 70B parameters, {\name} provides a controllable
accuracy--efficiency trade-off, although reduction tolerance varies
across models, tasks, components, and retention ratios. Under moderate
reduction, {\name} remains robust across the evaluated discriminative,
autoregressive-generation, and extends to vision--language models. Our analyses show that attention-side
computations are generally more reducible than MLP components, while
custom-kernel benchmarks demonstrate practical runtime gains,
particularly at longer sequence lengths. 

\section*{Limitations}

This work focuses on establishing {\name} as a training-free,
input-adaptive method for reducing matrix-product computation during
Transformer inference. Although our custom Triton kernels demonstrate
that these computational savings can translate into wall-clock
speedups, the current implementation is not yet a fully optimized
production system. The kernels cover selected attention-side
operations and are not fully fused with all Transformer components or
existing inference backends. Dynamic scoring, TopK selection,
index gathering, temporary tensors, and memory movement introduce
additional overhead, particularly at shorter sequence lengths.
Future systems work could jointly optimize selection and reduced
multiplication, improve data layouts, and reduce unnecessary memory
movement. Moreover, {\name} reduces active computation but does not
reduce the number of model parameters or the memory required to store
model weights. The observed training-free reducibility also motivates training-aware
extensions. The current method only exploits redundancy already
present in pretrained models. Future training objectives could
encourage activation energy to concentrate in more compact or
hardware-friendly dimension groups, learn component-specific
retention policies, or improve robustness under more aggressive
reduction. Such approaches would complement, rather than replace,
the training-free formulation studied here.

\section*{Acknowledgments}
Jiawei Zhou is supported by an Amazon Research Award of Spring 2025 and a Stony Brook Spring 2025 OVPR Seed Grant.

\bibliography{custom}

@article{vaswani2017attention,
  title={Attention is all you need},
  author={Vaswani, Ashish and Shazeer, Noam and Parmar, Niki and Uszkoreit, Jakob and Jones, Llion and Gomez, Aidan N and Kaiser, {\L}ukasz and Polosukhin, Illia},
  journal={Advances in neural information processing systems},
  volume={30},
  year={2017}
}

@misc{kaplan2020scalinglawsneurallanguage,
      title={Scaling Laws for Neural Language Models}, 
      author={Jared Kaplan and Sam McCandlish and Tom Henighan and Tom B. Brown and Benjamin Chess and Rewon Child and Scott Gray and Alec Radford and Jeffrey Wu and Dario Amodei},
      year={2020},
      eprint={2001.08361},
      archivePrefix={arXiv},
      primaryClass={cs.LG},
      url={https://arxiv.org/abs/2001.08361}, 
}

@misc{frantar2023sparsegptmassivelanguagemodels,
      title={SparseGPT: Massive Language Models Can Be Accurately Pruned in One-Shot}, 
      author={Elias Frantar and Dan Alistarh},
      year={2023},
      eprint={2301.00774},
      archivePrefix={arXiv},
      primaryClass={cs.LG},
      url={https://arxiv.org/abs/2301.00774}, 
}

@misc{han2015learningweightsconnectionsefficient,
      title={Learning both Weights and Connections for Efficient Neural Networks}, 
      author={Song Han and Jeff Pool and John Tran and William J. Dally},
      year={2015},
      eprint={1506.02626},
      archivePrefix={arXiv},
      primaryClass={cs.NE},
      url={https://arxiv.org/abs/1506.02626}, 
}

@article{Liu_2021,
   title={TERA: Self-Supervised Learning of Transformer Encoder Representation for Speech},
   volume={29},
   ISSN={2329-9304},
   url={http://dx.doi.org/10.1109/TASLP.2021.3095662},
   DOI={10.1109/taslp.2021.3095662},
   journal={IEEE/ACM Transactions on Audio, Speech, and Language Processing},
   publisher={Institute of Electrical and Electronics Engineers (IEEE)},
   author={Liu, Andy T. and Li, Shang-Wen and Lee, Hung-yi},
   year={2021},
   pages={2351–2366} }

@misc{peng2023structuredpruningselfsupervisedpretrained,
      title={Structured Pruning of Self-Supervised Pre-trained Models for Speech Recognition and Understanding}, 
      author={Yifan Peng and Kwangyoun Kim and Felix Wu and Prashant Sridhar and Shinji Watanabe},
      year={2023},
      eprint={2302.14132},
      archivePrefix={arXiv},
      primaryClass={cs.CL},
      url={https://arxiv.org/abs/2302.14132}, 
}

@article{Sajjad_2023,
   title={On the effect of dropping layers of pre-trained transformer models},
   volume={77},
   ISSN={0885-2308},
   url={http://dx.doi.org/10.1016/j.csl.2022.101429},
   DOI={10.1016/j.csl.2022.101429},
   journal={Computer Speech \& Language},
   publisher={Elsevier BV},
   author={Sajjad, Hassan and Dalvi, Fahim and Durrani, Nadir and Nakov, Preslav},
   year={2023},
   month=jan, pages={101429} }

@misc{ma2023llmprunerstructuralpruninglarge,
      title={LLM-Pruner: On the Structural Pruning of Large Language Models}, 
      author={Xinyin Ma and Gongfan Fang and Xinchao Wang},
      year={2023},
      eprint={2305.11627},
      archivePrefix={arXiv},
      primaryClass={cs.CL},
      url={https://arxiv.org/abs/2305.11627}, 
}

@misc{ashkboos2024slicegptcompresslargelanguage,
      title={SliceGPT: Compress Large Language Models by Deleting Rows and Columns}, 
      author={Saleh Ashkboos and Maximilian L. Croci and Marcelo Gennari do Nascimento and Torsten Hoefler and James Hensman},
      year={2024},
      eprint={2401.15024},
      archivePrefix={arXiv},
      primaryClass={cs.LG},
      url={https://arxiv.org/abs/2401.15024}, 
}

@misc{gao2024displlmdimensionindependentstructuralpruning,
      title={DISP-LLM: Dimension-Independent Structural Pruning for Large Language Models}, 
      author={Shangqian Gao and Chi-Heng Lin and Ting Hua and Tang Zheng and Yilin Shen and Hongxia Jin and Yen-Chang Hsu},
      year={2024},
      eprint={2410.11988},
      archivePrefix={arXiv},
      primaryClass={cs.CL},
      url={https://arxiv.org/abs/2410.11988}, 
}

@misc{sun2024simpleeffectivepruningapproach,
      title={A Simple and Effective Pruning Approach for Large Language Models}, 
      author={Mingjie Sun and Zhuang Liu and Anna Bair and J. Zico Kolter},
      year={2024},
      eprint={2306.11695},
      archivePrefix={arXiv},
      primaryClass={cs.CL},
      url={https://arxiv.org/abs/2306.11695}, 
}

@article{li2023compressing,
  title={Compressing context to enhance inference efficiency of large language models},
  author={Li, Yucheng and Dong, Bo and Lin, Chenghua and Guerin, Frank},
  journal={arXiv preprint arXiv:2310.06201},
  year={2023}
}

@misc{pan2024llmlingua2datadistillationefficient,
      title={LLMLingua-2: Data Distillation for Efficient and Faithful Task-Agnostic Prompt Compression}, 
      author={Zhuoshi Pan and Qianhui Wu and Huiqiang Jiang and Menglin Xia and Xufang Luo and Jue Zhang and Qingwei Lin and Victor Rühle and Yuqing Yang and Chin-Yew Lin and H. Vicky Zhao and Lili Qiu and Dongmei Zhang},
      year={2024},
      eprint={2403.12968},
      archivePrefix={arXiv},
      primaryClass={cs.CL},
      url={https://arxiv.org/abs/2403.12968}, 
}

@misc{fu2025deepthinkconfidence,
      title={Deep Think with Confidence}, 
      author={Yichao Fu and Xuewei Wang and Yuandong Tian and Jiawei Zhao},
      year={2025},
      eprint={2508.15260},
      archivePrefix={arXiv},
      primaryClass={cs.LG},
      url={https://arxiv.org/abs/2508.15260}, 
}

@article{zhang2023h2o,
  title={H2o: Heavy-hitter oracle for efficient generative inference of large language models},
  author={Zhang, Zhenyu and Sheng, Ying and Zhou, Tianyi and Chen, Tianlong and Zheng, Lianmin and Cai, Ruisi and Song, Zhao and Tian, Yuandong and R{\'e}, Christopher and Barrett, Clark and others},
  journal={Advances in Neural Information Processing Systems},
  volume={36},
  pages={34661--34710},
  year={2023}
}

@inproceedings{xiaoefficient,
  title={Efficient Streaming Language Models with Attention Sinks},
  author={Xiao, Guangxuan and Tian, Yuandong and Chen, Beidi and Han, Song and Lewis, Mike},
  booktitle={The Twelfth International Conference on Learning Representations},
year={2024}
}

@misc{shah2024flashattention3fastaccurateattention,
      title={FlashAttention-3: Fast and Accurate Attention with Asynchrony and Low-precision}, 
      author={Jay Shah and Ganesh Bikshandi and Ying Zhang and Vijay Thakkar and Pradeep Ramani and Tri Dao},
      year={2024},
      eprint={2407.08608},
      archivePrefix={arXiv},
      primaryClass={cs.LG},
      url={https://arxiv.org/abs/2407.08608}, 
}

@misc{yuan2025nativesparseattentionhardwarealigned,
      title={Native Sparse Attention: Hardware-Aligned and Natively Trainable Sparse Attention}, 
      author={Jingyang Yuan and Huazuo Gao and Damai Dai and Junyu Luo and Liang Zhao and Zhengyan Zhang and Zhenda Xie and Y. X. Wei and Lean Wang and Zhiping Xiao and Yuqing Wang and Chong Ruan and Ming Zhang and Wenfeng Liang and Wangding Zeng},
      year={2025},
      eprint={2502.11089},
      archivePrefix={arXiv},
      primaryClass={cs.CL},
      url={https://arxiv.org/abs/2502.11089}, 
}

@article{drineas2006fast,
  title={Fast Monte Carlo algorithms for matrices II: Computing a low-rank approximation to a matrix},
  author={Drineas, Petros and Kannan, Ravi and Mahoney, Michael W},
  journal={SIAM Journal on computing},
  volume={36},
  number={1},
  pages={158--183},
  year={2006},
  publisher={SIAM}
}

@misc{grattafiori2024llama3herdmodels,
      title={The Llama 3 Herd of Models}, 
      author={Aaron Grattafiori and Abhimanyu Dubey and Abhinav Jauhri and Abhinav Pandey and Abhishek Kadian and Ahmad Al-Dahle and Aiesha Letman and Akhil Mathur and Alan Schelten and Alex Vaughan and Amy Yang and Angela Fan and Anirudh Goyal and Anthony Hartshorn and Aobo Yang and Archi Mitra and Archie Sravankumar and Artem Korenev and Arthur Hinsvark and Arun Rao and Aston Zhang and Aurelien Rodriguez and Austen Gregerson and Ava Spataru and Baptiste Roziere and Bethany Biron and Binh Tang and Bobbie Chern and Charlotte Caucheteux and Chaya Nayak and Chloe Bi and Chris Marra and Chris McConnell and Christian Keller and Christophe Touret and Chunyang Wu and Corinne Wong and Cristian Canton Ferrer and Cyrus Nikolaidis and Damien Allonsius and Daniel Song and Danielle Pintz and Danny Livshits and Danny Wyatt and David Esiobu and Dhruv Choudhary and Dhruv Mahajan and Diego Garcia-Olano and Diego Perino and Dieuwke Hupkes and Egor Lakomkin and Ehab AlBadawy and Elina Lobanova and Emily Dinan and Eric Michael Smith and Filip Radenovic and Francisco Guzmán and Frank Zhang and Gabriel Synnaeve and Gabrielle Lee and Georgia Lewis Anderson and Govind Thattai and Graeme Nail and Gregoire Mialon and Guan Pang and Guillem Cucurell and Hailey Nguyen and Hannah Korevaar and Hu Xu and Hugo Touvron and Iliyan Zarov and Imanol Arrieta Ibarra and Isabel Kloumann and Ishan Misra and Ivan Evtimov and Jack Zhang and Jade Copet and Jaewon Lee and Jan Geffert and Jana Vranes and Jason Park and Jay Mahadeokar and Jeet Shah and Jelmer van der Linde and Jennifer Billock and Jenny Hong and Jenya Lee and Jeremy Fu and Jianfeng Chi and Jianyu Huang and Jiawen Liu and Jie Wang and Jiecao Yu and Joanna Bitton and Joe Spisak and Jongsoo Park and Joseph Rocca and Joshua Johnstun and Joshua Saxe and Junteng Jia and Kalyan Vasuden Alwala and Karthik Prasad and Kartikeya Upasani and Kate Plawiak and Ke Li and Kenneth Heafield and Kevin Stone and Khalid El-Arini and Krithika Iyer and Kshitiz Malik and Kuenley Chiu and Kunal Bhalla and Kushal Lakhotia and Lauren Rantala-Yeary and Laurens van der Maaten and Lawrence Chen and Liang Tan and Liz Jenkins and Louis Martin and Lovish Madaan and Lubo Malo and Lukas Blecher and Lukas Landzaat and Luke de Oliveira and Madeline Muzzi and Mahesh Pasupuleti and Mannat Singh and Manohar Paluri and Marcin Kardas and Maria Tsimpoukelli and Mathew Oldham and Mathieu Rita and Maya Pavlova and Melanie Kambadur and Mike Lewis and Min Si and Mitesh Kumar Singh and Mona Hassan and Naman Goyal and Narjes Torabi and Nikolay Bashlykov and Nikolay Bogoychev and Niladri Chatterji and Ning Zhang and Olivier Duchenne and Onur Çelebi and Patrick Alrassy and Pengchuan Zhang and Pengwei Li and Petar Vasic and Peter Weng and Prajjwal Bhargava and Pratik Dubal and Praveen Krishnan and Punit Singh Koura and Puxin Xu and Qing He and Qingxiao Dong and Ragavan Srinivasan and Raj Ganapathy and Ramon Calderer and Ricardo Silveira Cabral and Robert Stojnic and Roberta Raileanu and Rohan Maheswari and Rohit Girdhar and Rohit Patel and Romain Sauvestre and Ronnie Polidoro and Roshan Sumbaly and Ross Taylor and Ruan Silva and Rui Hou and Rui Wang and Saghar Hosseini and Sahana Chennabasappa and Sanjay Singh and Sean Bell and Seohyun Sonia Kim and Sergey Edunov and Shaoliang Nie and Sharan Narang and Sharath Raparthy and Sheng Shen and Shengye Wan and Shruti Bhosale and Shun Zhang and Simon Vandenhende and Soumya Batra and Spencer Whitman and Sten Sootla and Stephane Collot and Suchin Gururangan and Sydney Borodinsky and Tamar Herman and Tara Fowler and Tarek Sheasha and Thomas Georgiou and Thomas Scialom and Tobias Speckbacher and Todor Mihaylov and Tong Xiao and Ujjwal Karn and Vedanuj Goswami and Vibhor Gupta and Vignesh Ramanathan and Viktor Kerkez and Vincent Gonguet and Virginie Do and Vish Vogeti and Vítor Albiero and Vladan Petrovic and Weiwei Chu and Wenhan Xiong and Wenyin Fu and Whitney Meers and Xavier Martinet and Xiaodong Wang and Xiaofang Wang and Xiaoqing Ellen Tan and Xide Xia and Xinfeng Xie and Xuchao Jia and Xuewei Wang and Yaelle Goldschlag and Yashesh Gaur and Yasmine Babaei and Yi Wen and Yiwen Song and Yuchen Zhang and Yue Li and Yuning Mao and Zacharie Delpierre Coudert and Zheng Yan and Zhengxing Chen and Zoe Papakipos and Aaditya Singh and Aayushi Srivastava and Abha Jain and Adam Kelsey and Adam Shajnfeld and Adithya Gangidi and Adolfo Victoria and Ahuva Goldstand and Ajay Menon and Ajay Sharma and Alex Boesenberg and Alexei Baevski and Allie Feinstein and Amanda Kallet and Amit Sangani and Amos Teo and Anam Yunus and Andrei Lupu and Andres Alvarado and Andrew Caples and Andrew Gu and Andrew Ho and Andrew Poulton and Andrew Ryan and Ankit Ramchandani and Annie Dong and Annie Franco and Anuj Goyal and Aparajita Saraf and Arkabandhu Chowdhury and Ashley Gabriel and Ashwin Bharambe and Assaf Eisenman and Azadeh Yazdan and Beau James and Ben Maurer and Benjamin Leonhardi and Bernie Huang and Beth Loyd and Beto De Paola and Bhargavi Paranjape and Bing Liu and Bo Wu and Boyu Ni and Braden Hancock and Bram Wasti and Brandon Spence and Brani Stojkovic and Brian Gamido and Britt Montalvo and Carl Parker and Carly Burton and Catalina Mejia and Ce Liu and Changhan Wang and Changkyu Kim and Chao Zhou and Chester Hu and Ching-Hsiang Chu and Chris Cai and Chris Tindal and Christoph Feichtenhofer and Cynthia Gao and Damon Civin and Dana Beaty and Daniel Kreymer and Daniel Li and David Adkins and David Xu and Davide Testuggine and Delia David and Devi Parikh and Diana Liskovich and Didem Foss and Dingkang Wang and Duc Le and Dustin Holland and Edward Dowling and Eissa Jamil and Elaine Montgomery and Eleonora Presani and Emily Hahn and Emily Wood and Eric-Tuan Le and Erik Brinkman and Esteban Arcaute and Evan Dunbar and Evan Smothers and Fei Sun and Felix Kreuk and Feng Tian and Filippos Kokkinos and Firat Ozgenel and Francesco Caggioni and Frank Kanayet and Frank Seide and Gabriela Medina Florez and Gabriella Schwarz and Gada Badeer and Georgia Swee and Gil Halpern and Grant Herman and Grigory Sizov and Guangyi and Zhang and Guna Lakshminarayanan and Hakan Inan and Hamid Shojanazeri and Han Zou and Hannah Wang and Hanwen Zha and Haroun Habeeb and Harrison Rudolph and Helen Suk and Henry Aspegren and Hunter Goldman and Hongyuan Zhan and Ibrahim Damlaj and Igor Molybog and Igor Tufanov and Ilias Leontiadis and Irina-Elena Veliche and Itai Gat and Jake Weissman and James Geboski and James Kohli and Janice Lam and Japhet Asher and Jean-Baptiste Gaya and Jeff Marcus and Jeff Tang and Jennifer Chan and Jenny Zhen and Jeremy Reizenstein and Jeremy Teboul and Jessica Zhong and Jian Jin and Jingyi Yang and Joe Cummings and Jon Carvill and Jon Shepard and Jonathan McPhie and Jonathan Torres and Josh Ginsburg and Junjie Wang and Kai Wu and Kam Hou U and Karan Saxena and Kartikay Khandelwal and Katayoun Zand and Kathy Matosich and Kaushik Veeraraghavan and Kelly Michelena and Keqian Li and Kiran Jagadeesh and Kun Huang and Kunal Chawla and Kyle Huang and Lailin Chen and Lakshya Garg and Lavender A and Leandro Silva and Lee Bell and Lei Zhang and Liangpeng Guo and Licheng Yu and Liron Moshkovich and Luca Wehrstedt and Madian Khabsa and Manav Avalani and Manish Bhatt and Martynas Mankus and Matan Hasson and Matthew Lennie and Matthias Reso and Maxim Groshev and Maxim Naumov and Maya Lathi and Meghan Keneally and Miao Liu and Michael L. Seltzer and Michal Valko and Michelle Restrepo and Mihir Patel and Mik Vyatskov and Mikayel Samvelyan and Mike Clark and Mike Macey and Mike Wang and Miquel Jubert Hermoso and Mo Metanat and Mohammad Rastegari and Munish Bansal and Nandhini Santhanam and Natascha Parks and Natasha White and Navyata Bawa and Nayan Singhal and Nick Egebo and Nicolas Usunier and Nikhil Mehta and Nikolay Pavlovich Laptev and Ning Dong and Norman Cheng and Oleg Chernoguz and Olivia Hart and Omkar Salpekar and Ozlem Kalinli and Parkin Kent and Parth Parekh and Paul Saab and Pavan Balaji and Pedro Rittner and Philip Bontrager and Pierre Roux and Piotr Dollar and Polina Zvyagina and Prashant Ratanchandani and Pritish Yuvraj and Qian Liang and Rachad Alao and Rachel Rodriguez and Rafi Ayub and Raghotham Murthy and Raghu Nayani and Rahul Mitra and Rangaprabhu Parthasarathy and Raymond Li and Rebekkah Hogan and Robin Battey and Rocky Wang and Russ Howes and Ruty Rinott and Sachin Mehta and Sachin Siby and Sai Jayesh Bondu and Samyak Datta and Sara Chugh and Sara Hunt and Sargun Dhillon and Sasha Sidorov and Satadru Pan and Saurabh Mahajan and Saurabh Verma and Seiji Yamamoto and Sharadh Ramaswamy and Shaun Lindsay and Shaun Lindsay and Sheng Feng and Shenghao Lin and Shengxin Cindy Zha and Shishir Patil and Shiva Shankar and Shuqiang Zhang and Shuqiang Zhang and Sinong Wang and Sneha Agarwal and Soji Sajuyigbe and Soumith Chintala and Stephanie Max and Stephen Chen and Steve Kehoe and Steve Satterfield and Sudarshan Govindaprasad and Sumit Gupta and Summer Deng and Sungmin Cho and Sunny Virk and Suraj Subramanian and Sy Choudhury and Sydney Goldman and Tal Remez and Tamar Glaser and Tamara Best and Thilo Koehler and Thomas Robinson and Tianhe Li and Tianjun Zhang and Tim Matthews and Timothy Chou and Tzook Shaked and Varun Vontimitta and Victoria Ajayi and Victoria Montanez and Vijai Mohan and Vinay Satish Kumar and Vishal Mangla and Vlad Ionescu and Vlad Poenaru and Vlad Tiberiu Mihailescu and Vladimir Ivanov and Wei Li and Wenchen Wang and Wenwen Jiang and Wes Bouaziz and Will Constable and Xiaocheng Tang and Xiaojian Wu and Xiaolan Wang and Xilun Wu and Xinbo Gao and Yaniv Kleinman and Yanjun Chen and Ye Hu and Ye Jia and Ye Qi and Yenda Li and Yilin Zhang and Ying Zhang and Yossi Adi and Youngjin Nam and Yu and Wang and Yu Zhao and Yuchen Hao and Yundi Qian and Yunlu Li and Yuzi He and Zach Rait and Zachary DeVito and Zef Rosnbrick and Zhaoduo Wen and Zhenyu Yang and Zhiwei Zhao and Zhiyu Ma},
      year={2024},
      eprint={2407.21783},
      archivePrefix={arXiv},
      primaryClass={cs.AI},
      url={https://arxiv.org/abs/2407.21783}, 
}

@misc{yang2025qwen3technicalreport,
      title={Qwen3 Technical Report}, 
      author={An Yang and Anfeng Li and Baosong Yang and Beichen Zhang and Binyuan Hui and Bo Zheng and Bowen Yu and Chang Gao and Chengen Huang and Chenxu Lv and Chujie Zheng and Dayiheng Liu and Fan Zhou and Fei Huang and Feng Hu and Hao Ge and Haoran Wei and Huan Lin and Jialong Tang and Jian Yang and Jianhong Tu and Jianwei Zhang and Jianxin Yang and Jiaxi Yang and Jing Zhou and Jingren Zhou and Junyang Lin and Kai Dang and Keqin Bao and Kexin Yang and Le Yu and Lianghao Deng and Mei Li and Mingfeng Xue and Mingze Li and Pei Zhang and Peng Wang and Qin Zhu and Rui Men and Ruize Gao and Shixuan Liu and Shuang Luo and Tianhao Li and Tianyi Tang and Wenbiao Yin and Xingzhang Ren and Xinyu Wang and Xinyu Zhang and Xuancheng Ren and Yang Fan and Yang Su and Yichang Zhang and Yinger Zhang and Yu Wan and Yuqiong Liu and Zekun Wang and Zeyu Cui and Zhenru Zhang and Zhipeng Zhou and Zihan Qiu},
      year={2025},
      eprint={2505.09388},
      archivePrefix={arXiv},
      primaryClass={cs.CL},
      url={https://arxiv.org/abs/2505.09388}, 
}

@misc{bai2025qwen25vltechnicalreport,
      title={Qwen2.5-VL Technical Report}, 
      author={Shuai Bai and Keqin Chen and Xuejing Liu and Jialin Wang and Wenbin Ge and Sibo Song and Kai Dang and Peng Wang and Shijie Wang and Jun Tang and Humen Zhong and Yuanzhi Zhu and Mingkun Yang and Zhaohai Li and Jianqiang Wan and Pengfei Wang and Wei Ding and Zheren Fu and Yiheng Xu and Jiabo Ye and Xi Zhang and Tianbao Xie and Zesen Cheng and Hang Zhang and Zhibo Yang and Haiyang Xu and Junyang Lin},
      year={2025},
      eprint={2502.13923},
      archivePrefix={arXiv},
      primaryClass={cs.CV},
      url={https://arxiv.org/abs/2502.13923}, 
}

@inproceedings{gordon-etal-2012-semeval,
    title = "{S}em{E}val-2012 Task 7: Choice of Plausible Alternatives: An Evaluation of Commonsense Causal Reasoning",
    author = "Gordon, Andrew  and
      Kozareva, Zornitsa  and
      Roemmele, Melissa",
    editor = "Agirre, Eneko  and
      Bos, Johan  and
      Diab, Mona  and
      Manandhar, Suresh  and
      Marton, Yuval  and
      Yuret, Deniz",
    booktitle = "*{SEM} 2012: The First Joint Conference on Lexical and Computational Semantics {--} Volume 1: Proceedings of the main conference and the shared task, and Volume 2: Proceedings of the Sixth International Workshop on Semantic Evaluation ({S}em{E}val 2012)",
    month = "7-8 " # jun,
    year = "2012",
    address = "Montr{\'e}al, Canada",
    publisher = "Association for Computational Linguistics",
    url = "https://aclanthology.org/S12-1052/",
    pages = "394--398"
}

@inproceedings{bisk2020piqa,
  title={Piqa: Reasoning about physical commonsense in natural language},
  author={Bisk, Yonatan and Zellers, Rowan and Gao, Jianfeng and Choi, Yejin and others},
  booktitle={Proceedings of the AAAI conference on artificial intelligence},
  volume={34},
  pages={7432--7439},
  year={2020}
}

@misc{clark2018thinksolvedquestionanswering,
      title={Think you have Solved Question Answering? Try ARC, the AI2 Reasoning Challenge}, 
      author={Peter Clark and Isaac Cowhey and Oren Etzioni and Tushar Khot and Ashish Sabharwal and Carissa Schoenick and Oyvind Tafjord},
      year={2018},
      eprint={1803.05457},
      archivePrefix={arXiv},
      primaryClass={cs.AI},
      url={https://arxiv.org/abs/1803.05457}, 
}

@misc{talmor2019commonsenseqaquestionansweringchallenge,
      title={CommonsenseQA: A Question Answering Challenge Targeting Commonsense Knowledge}, 
      author={Alon Talmor and Jonathan Herzig and Nicholas Lourie and Jonathan Berant},
      year={2019},
      eprint={1811.00937},
      archivePrefix={arXiv},
      primaryClass={cs.CL},
      url={https://arxiv.org/abs/1811.00937}, 
}

@misc{hendrycks2021measuringmassivemultitasklanguage,
      title={Measuring Massive Multitask Language Understanding}, 
      author={Dan Hendrycks and Collin Burns and Steven Basart and Andy Zou and Mantas Mazeika and Dawn Song and Jacob Steinhardt},
      year={2021},
      eprint={2009.03300},
      archivePrefix={arXiv},
      primaryClass={cs.CY},
      url={https://arxiv.org/abs/2009.03300}, 
}

@misc{merity2016pointersentinelmixturemodels,
      title={Pointer Sentinel Mixture Models}, 
      author={Stephen Merity and Caiming Xiong and James Bradbury and Richard Socher},
      year={2016},
      eprint={1609.07843},
      archivePrefix={arXiv},
      primaryClass={cs.CL},
      url={https://arxiv.org/abs/1609.07843}, 
}

@inproceedings{zhu2015aligning,
  title={Aligning books and movies: Towards story-like visual explanations by watching movies and reading books},
  author={Zhu, Yukun and Kiros, Ryan and Zemel, Rich and Salakhutdinov, Ruslan and Urtasun, Raquel and Torralba, Antonio and Fidler, Sanja},
  booktitle={Proceedings of the IEEE international conference on computer vision},
  pages={19--27},
  year={2015}
}

@misc{cobbe2021trainingverifierssolvemath,
      title={Training Verifiers to Solve Math Word Problems}, 
      author={Karl Cobbe and Vineet Kosaraju and Mohammad Bavarian and Mark Chen and Heewoo Jun and Lukasz Kaiser and Matthias Plappert and Jerry Tworek and Jacob Hilton and Reiichiro Nakano and Christopher Hesse and John Schulman},
      year={2021},
      eprint={2110.14168},
      archivePrefix={arXiv},
      primaryClass={cs.LG},
      url={https://arxiv.org/abs/2110.14168}, 
}

@misc{chen2021evaluatinglargelanguagemodels,
      title={Evaluating Large Language Models Trained on Code}, 
      author={Mark Chen and Jerry Tworek and Heewoo Jun and Qiming Yuan and Henrique Ponde de Oliveira Pinto and Jared Kaplan and Harri Edwards and Yuri Burda and Nicholas Joseph and Greg Brockman and Alex Ray and Raul Puri and Gretchen Krueger and Michael Petrov and Heidy Khlaaf and Girish Sastry and Pamela Mishkin and Brooke Chan and Scott Gray and Nick Ryder and Mikhail Pavlov and Alethea Power and Lukasz Kaiser and Mohammad Bavarian and Clemens Winter and Philippe Tillet and Felipe Petroski Such and Dave Cummings and Matthias Plappert and Fotios Chantzis and Elizabeth Barnes and Ariel Herbert-Voss and William Hebgen Guss and Alex Nichol and Alex Paino and Nikolas Tezak and Jie Tang and Igor Babuschkin and Suchir Balaji and Shantanu Jain and William Saunders and Christopher Hesse and Andrew N. Carr and Jan Leike and Josh Achiam and Vedant Misra and Evan Morikawa and Alec Radford and Matthew Knight and Miles Brundage and Mira Murati and Katie Mayer and Peter Welinder and Bob McGrew and Dario Amodei and Sam McCandlish and Ilya Sutskever and Wojciech Zaremba},
      year={2021},
      eprint={2107.03374},
      archivePrefix={arXiv},
      primaryClass={cs.LG},
      url={https://arxiv.org/abs/2107.03374}, 
}

@misc{hsieh2024rulerwhatsrealcontext,
      title={RULER: What's the Real Context Size of Your Long-Context Language Models?}, 
      author={Cheng-Ping Hsieh and Simeng Sun and Samuel Kriman and Shantanu Acharya and Dima Rekesh and Fei Jia and Yang Zhang and Boris Ginsburg},
      year={2024},
      eprint={2404.06654},
      archivePrefix={arXiv},
      primaryClass={cs.CL},
      url={https://arxiv.org/abs/2404.06654}, 
}

@inproceedings{nallapati-etal-2016-abstractive,
    title = "Abstractive Text Summarization using Sequence-to-sequence {RNN}s and Beyond",
    author = "Nallapati, Ramesh  and
      Zhou, Bowen  and
      dos Santos, Cicero  and
      Gu{\ensuremath{\dot{}}}l{\c{c}}ehre, {\c{C}}a{\u{g}}lar  and
      Xiang, Bing",
    editor = "Riezler, Stefan  and
      Goldberg, Yoav",
    booktitle = "Proceedings of the 20th {SIGNLL} Conference on Computational Natural Language Learning",
    month = aug,
    year = "2016",
    address = "Berlin, Germany",
    publisher = "Association for Computational Linguistics",
    url = "https://aclanthology.org/K16-1028/",
    doi = "10.18653/v1/K16-1028",
    pages = "280--290"
}

@misc{li2023evaluatingobjecthallucinationlarge,
      title={Evaluating Object Hallucination in Large Vision-Language Models}, 
      author={Yifan Li and Yifan Du and Kun Zhou and Jinpeng Wang and Wayne Xin Zhao and Ji-Rong Wen},
      year={2023},
      eprint={2305.10355},
      archivePrefix={arXiv},
      primaryClass={cs.CV},
      url={https://arxiv.org/abs/2305.10355}, 
}

@misc{fu2024blinkmultimodallargelanguage,
      title={BLINK: Multimodal Large Language Models Can See but Not Perceive}, 
      author={Xingyu Fu and Yushi Hu and Bangzheng Li and Yu Feng and Haoyu Wang and Xudong Lin and Dan Roth and Noah A. Smith and Wei-Chiu Ma and Ranjay Krishna},
      year={2024},
      eprint={2404.12390},
      archivePrefix={arXiv},
      primaryClass={cs.CV},
      url={https://arxiv.org/abs/2404.12390}, 
}

@misc{lee2024catscontextuallyawarethresholdingsparsity,
      title={CATS: Contextually-Aware Thresholding for Sparsity in Large Language Models}, 
      author={Donghyun Lee and Je-Yong Lee and Genghan Zhang and Mo Tiwari and Azalia Mirhoseini},
      year={2024},
      eprint={2404.08763},
      archivePrefix={arXiv},
      primaryClass={cs.LG},
      url={https://arxiv.org/abs/2404.08763}, 
}

@misc{liu2025trainingfreeactivationsparsitylarge,
      title={Training-Free Activation Sparsity in Large Language Models}, 
      author={James Liu and Pragaash Ponnusamy and Tianle Cai and Han Guo and Yoon Kim and Ben Athiwaratkun},
      year={2025},
      eprint={2408.14690},
      archivePrefix={arXiv},
      primaryClass={cs.CL},
      url={https://arxiv.org/abs/2408.14690}, 
}

@inproceedings{ICLR2025_0d4d9fc3,
 author = {Li, Yanhong and Livescu, Karen and Zhou, Jiawei},
 booktitle = {International Conference on Learning Representations},
 editor = {Y. Yue and A. Garg and N. Peng and F. Sha and R. Yu},
 pages = {4463--4500},
 title = {Chunk-Distilled Language Modeling},
 url = {https://proceedings.iclr.cc/paper_files/paper/2025/file/0d4d9fc36c783fcd31af2fda532e6c33-Paper-Conference.pdf},
 volume = {2025},
 year = {2025}
}

@inproceedings{li2025text,
  title={Text or Pixels? Evaluating Efficiency and Understanding of LLMs with Visual Text Inputs.},
  author={Li, Yanhong and Lan, Zixuan and Zhou, Jiawei},
  booktitle={EMNLP (Findings)},
  pages={10564--10578},
  year={2025}
}

@article{liu2023winner,
  title={Winner-take-all column row sampling for memory efficient adaptation of language model},
  author={Liu, Zirui and Wang, Guanchu and Zhong, Shaochen Henry and Xu, Zhaozhuo and Zha, Daochen and Tang, Ruixiang Ryan and Jiang, Zhimeng Stephen and Zhou, Kaixiong and Chaudhary, Vipin and Xu, Shuai and others},
  journal={Advances in Neural Information Processing Systems},
  volume={36},
  pages={3402--3424},
  year={2023}
}

\newpage
\appendix
\clearpage
\FloatBarrier
\begin{center}
    \Large{\textbf{Technical Appendices}}
\end{center}

\startcontents[appendix]
\printcontents[appendix]{}{1}{}{\setcounter{tocdepth}{2}}

\vspace{1em}

\section{Implementation Details}
\label{sec:appendix-implementation}

\paragraph{Hardware and framework.}

The experiments were conducted on NVIDIA GPUs, including RTX A6000,
RTX 6000 Ada, A100, and L40S devices. Unless otherwise specified,
model inference uses PyTorch and Hugging Face Transformers with
bfloat16 precision. All experiments use official pretrained
checkpoints without task-specific fine-tuning or additional training.

We integrate RMM by replacing the relevant attention and projection
operations in the Hugging Face forward pass. The modified operations
perform activation-aware index selection and execute the
corresponding reduced matrix products without modifying the
pretrained model weights.

\paragraph{Inference setup.}

Unless otherwise specified, we evaluate retention ratios from $0.9$
to $0.5$, together with the unreduced model at
$\mathrm{RR}=1.0$. When a configuration contains multiple target
matrix products, the reported RR is applied to the contraction axis
of every included product.

RMM recomputes the retained indices at each affected layer and
forward step using the current activations. During prefill, selection
is computed from the activation block associated with the current
input sequence. During autoregressive decoding, selection is updated
using the current decoding state. The number of retained indices is
determined by the specified RR and the size of the corresponding
contraction dimension.

Batch sizes are selected according to model size and available GPU
memory, typically between 8 and 16 for 7B/8B models and between 1 and
2 for 70B models. Unless otherwise stated, comparisons within the
same table use the same batch size and inference configuration. For
long-context benchmarks such as \textsc{Ruler}, we evaluate the
context lengths specified by the benchmark, up to 30K tokens.

\paragraph{Evaluation protocols and metrics.}

For multiple-choice QA benchmarks, including \textsc{Copa},
\textsc{PiQA}, \textsc{CommonsenseQA}, ARC-Easy, and ARC-Challenge,
we compare the conditional likelihoods of candidate answers and
report accuracy. More structured language-model evaluations,
including \textsc{MMLU}, \textsc{GSM8K}, \textsc{HumanEval}, and
\textsc{Ruler}, use their corresponding task configurations through
the evaluation harness or benchmark-specific evaluation pipeline.
We report the standard metric for each task, including accuracy or
exact match for reasoning tasks and pass@1 for \textsc{HumanEval}.

For language modeling, we report perplexity on \textsc{WikiText} and
\textsc{BookCorpus}. For summarization on \textsc{CNN/DailyMail}, we
report ROUGE-1, ROUGE-2, ROUGE-L, ROUGE-Lsum, and BERTScore. For
multimodal evaluation, we use the task-specific accuracy metrics
defined by \textsc{POPE}, BLINK Art Style, BLINK Forensic Detection,
and BLINK Counting.

The INT8 compatibility experiment uses the same quantized loading
and candidate-scoring protocol for all retention ratios within that
comparison. All reported evaluations are performed without
task-specific adaptation.

\section{Supplementary Results}
\label{sec:supp-results}

\subsection{Additional Results on QA and Language Modeling}
To complement the QA and generation analysis in Section~\ref{sec:main_results}, 
we provide full perplexity results on \textsc{Wikitext} and \textsc{BookCorpus} for smaller-scale models. 
In addition to the 7B/8B/32B/70B models reported in the main paper, 
we include Llama-3.2-1B and Llama-3.2-3B. 
Figure~\ref{fig:1B3B} and Table~\ref{tab:perplexity_matrix_full} present the complete results under different retention ratios (RR).

\begin{figure*}[t]
    \centering
    \includegraphics[width=\textwidth]{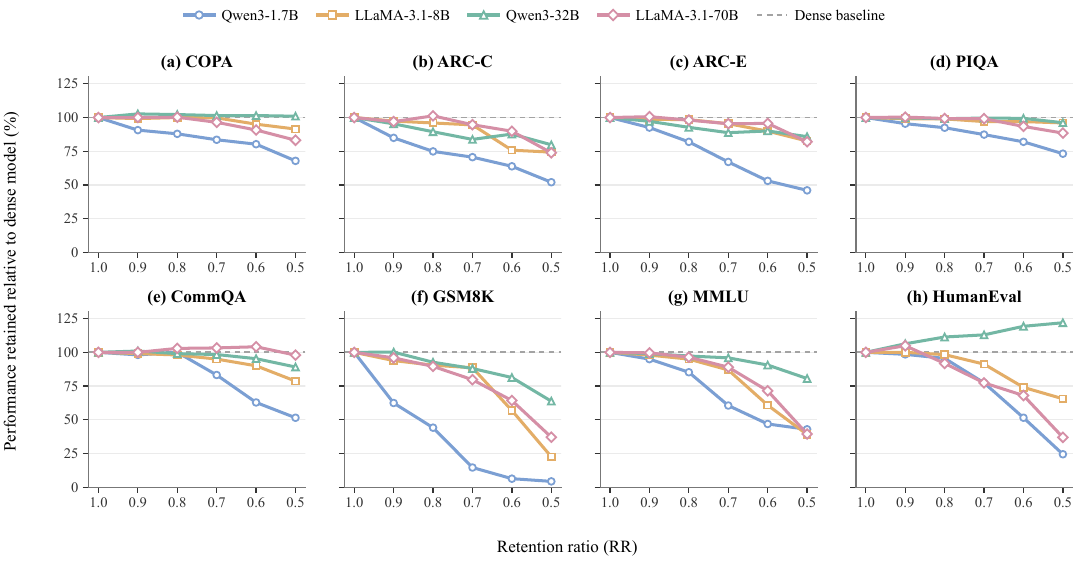}
    \caption{
    Performance retained relative to each model's dense baseline
    across eight benchmarks and different retention ratios.
    The dashed line denotes dense-model performance. The full numeric results are shown in Table \ref{tab:main_results}.
    }
    \label{fig:retention_robustness}
\end{figure*}

\begin{table*}[t]
\centering
\small
\renewcommand{\arraystretch}{1.0}
\begin{tabularx}{\textwidth}{
    @{}l*{6}{>{\centering\arraybackslash}X}@{}
}
\toprule
RR & 1.0 & 0.9 & 0.8 & 0.7 & 0.6 & 0.5 \\
\midrule
\multicolumn{7}{c}{\textbf{WikiText Perplexity} $\downarrow$} \\
\midrule
Llama3.2 1B
& 20.04 & 20.66 & 22.77 & 31.29 & 68.52 & 151.64 \\
Llama3.2 3B
& 15.89 & 16.23 & 17.01 & 18.82 & 25.04 & 42.54 \\
Qwen3.1 7B
& 28.64 & 31.49 & 37.80 & 54.00 & 93.52 & 219.31 \\
Llama3.1 8B
& 13.39 & 14.34 & 15.22 & 17.03 & 21.35 & 32.65 \\
Qwen3 32B
& 13.97 & 14.62 & 15.41 & 15.71 & 15.99 & 18.53 \\
Llama3.1 70B
& 7.24 & 7.46 & 8.38 & 14.14 & 42.62 & 167.78 \\
\midrule
\multicolumn{7}{c}{\textbf{BookCorpus Perplexity} $\downarrow$} \\
\midrule
Llama3.1 1B
& 21.18 & 22.03 & 24.39 & 39.63 & 96.81 & 192.81 \\
Llama3.2 3B
& 17.79 & 18.07 & 18.86 & 21.84 & 34.47 & 54.53 \\
Qwen3.1 7B
& 31.95 & 34.21 & 43.64 & 62.91 & 113.57 & 322.63 \\
Llama3.1 8B
& 15.25 & 15.59 & 16.48 & 20.97 & 32.63 & 53.80 \\
Qwen3 32B
& 17.43 & 17.72 & 18.12 & 18.93 & 22.21 & 29.68 \\
Llama3.1 70B
& 12.20 & 12.29 & 13.06 & 19.55 & 36.55 & 126.18 \\
\bottomrule
\end{tabularx}
\caption{Perplexity across retention ratios on WikiText and BookCorpus.}
\label{tab:perplexity_matrix_full}
\end{table*}
The results show the same trend as larger models: perplexity increases gradually as RR decreases, 
with a sharp degradation once RR drops below 0.6. 
Moreover, the 3B model consistently shows greater robustness than the 1B model (Figure~\ref{fig:1B3B}), 
reinforcing our claim that representational redundancy grows with scale. 

\subsection{Additional Results on Summarization}
We also expand the summarization results on \textsc{CNN/DailyMail} beyond those in the main paper. 
Table~\ref{tab:summary_results_full} includes Llama-3.2-1B and 3B alongside the larger 7B/8B models. 

{\name} matches the baseline at RR = 0.8 across all scales, 
while static and random pruning degrade severely. 
At RR = 0.5, smaller models drop more sharply, but {\name} remains consistently better than all baselines. 
This confirms that our method preserves summarization quality even in small-scale models, 
while redundancy increases with size, enabling more aggressive pruning at larger scales.

\begin{figure*}[t]
    \centering
    \includegraphics[width=\textwidth]{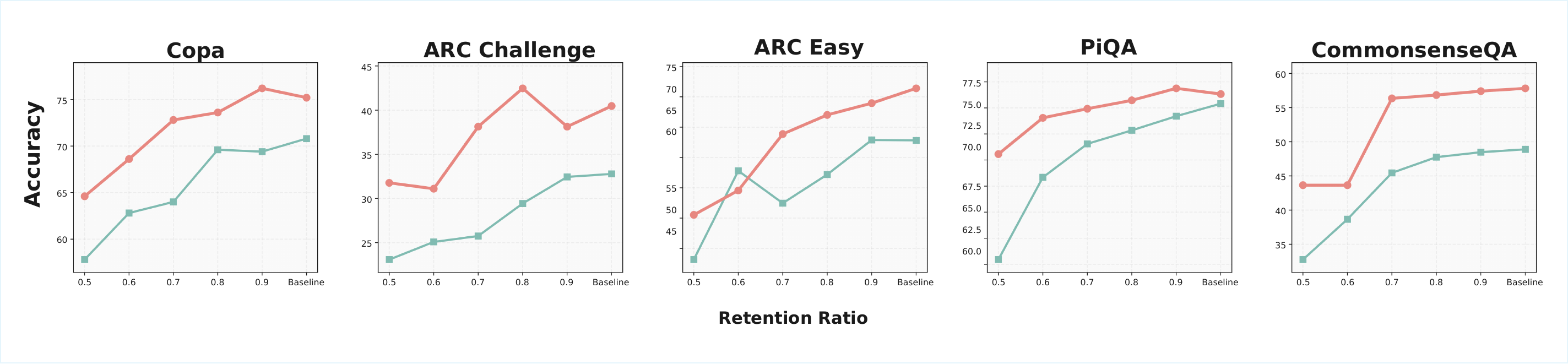}
    \caption{
    Performance of LLaMA-3.2-1B and LLaMA-3.2-3B across five QA
    tasks. The x-axis shows the retention ratio (RR), with
    ``Baseline'' denoting the unreduced model, and the y-axis reports
    accuracy in percentage points (\%). Red curves denote
    LLaMA-3.2-3B, while green curves denote LLaMA-3.2-1B.
    }
    \label{fig:1B3B}
\end{figure*}
\begin{table*}[ht]
\centering
\scriptsize
\renewcommand{\arraystretch}{0.95}
\setlength{\tabcolsep}{3pt}
\begin{tabularx}{\textwidth}{llcXXXXX}
\toprule
Model & Method & \text{RR} & Rouge-1 & Rouge-2 & Rouge-L & Rouge-Lsum & BERTScore \\
\midrule
\multirow{9}{*}{Llama3.1 8B}
    & Baseline       & --   & 37.44 & 15.56 & 24.31 & 31.29 & 86.76 \\
    & \cellcolor{gray!20}{\name}  & 0.8  & \textbf{37.54} & \textbf{15.70} & \textbf{24.23} & \textbf{31.35} & \textbf{86.72} \\
    & \cellcolor{gray!20}{\name} & 0.5  & \textbf{34.15} & \textbf{13.60} & \textbf{22.03} & \textbf{28.70} & \textbf{85.75} \\
    & Static        & 0.8  & 37.36 & 15.63 & 24.24 & 31.24 & 86.67 \\
    & Static        & 0.5  & 28.01 & 9.85  & 19.28 & 24.34 & 84.02 \\
    & Random        & 0.8  & 6.90  & 0.60  & 6.20  & 6.69  & 77.97 \\
    & Random        & 0.5  & 5.67  & 0.20  & 5.23  & 5.54  & 81.42 \\
    & H2O           & 0.8  & 24.38 & 9.26  & 16.28 & 21.88 & 82.72 \\
    & H2O           & 0.5  & 24.38 & 9.26  & 16.28 & 21.88 & 82.72 \\
\midrule
\multirow{9}{*}{Llama3.2 1B}
    & Baseline      & --   & 36.51 & 15.24 & 23.55 & 30.59 & 86.41 \\
    & \cellcolor{gray!20}{\name} & 0.8  & \textbf{37.27} & \textbf{15.71} & \textbf{23.62} & \textbf{30.96} & \textbf{86.35} \\
    & \cellcolor{gray!20}{\name} & 0.5  & \textbf{11.90} & \textbf{2.35}  & \textbf{9.80}  & \textbf{11.15} & \textbf{79.55} \\
    & Static        & 0.8  & 36.77 & 15.36 & 23.39 & 30.68 & 86.35 \\
    & Static        & 0.5  & 4.73  & 0.07  & 4.26  & 4.58  & 75.53 \\
    & Random        & 0.8  & 7.30  & 0.02  & 6.58  & 7.08  & 79.15 \\
    & Random        & 0.5  & 6.26  & 0.40  & 5.53  & 6.01  & 76.85 \\
    & H2O           & 0.8  & 23.22 & 8.12  & 15.56 & 20.30 & 83.76 \\
    & H2O           & 0.5  & 23.22 & 8.12  & 15.56 & 20.30 & 83.76 \\
\midrule
\multirow{9}{*}{Llama3.2 3B}
    & Baseline      & --   & 36.72 & 15.08 & 23.56 & 30.63 & 86.55 \\
    & \cellcolor{gray!20}{\name} & 0.8  & \textbf{36.72} & \textbf{15.16} & \textbf{23.58} & \textbf{30.66} & \textbf{86.54} \\
    & \cellcolor{gray!20}{\name} & 0.5  & \textbf{28.31} & \textbf{9.83}  & \textbf{19.45} & \textbf{24.60} & \textbf{84.74} \\
    & Static        & 0.8  & 35.96 & 14.81 & 23.24 & 30.13 & 86.31 \\
    & Static        & 0.5  & 19.91 & 6.06  & 14.69 & 17.83 & 82.06 \\
    & Random        & 0.8  & 6.67  & 0.39  & 5.93  & 6.44  & 79.72 \\
    & Random        & 0.5  & 5.85  & 0.04  & 5.33  & 5.70  & 77.09 \\
    & H2O           & 0.8  & 19.49 & 6.32  & 13.83 & 17.03 & 82.78 \\
    & H2O           & 0.5  & 19.49 & 6.32  & 13.83 & 17.03 & 82.78 \\
\midrule
\multirow{9}{*}{Qwen3-1 7B}
    & Baseline      & --   & 36.56 & 13.05 & 22.86 & 29.75 & 85.91 \\
    & \cellcolor{gray!20}{\name} & 0.8  & \textbf{35.32} & \textbf{12.08} & \textbf{22.31} & \textbf{28.99} & \textbf{86.81} \\
    & \cellcolor{gray!20}{\name} & 0.5  & \textbf{21.91} & \textbf{5.89}  & \textbf{14.54} & \textbf{18.60} & \textbf{83.33} \\
    & Static        & 0.8  & 34.43 & 11.63 & 22.05 & 28.47 & 86.67 \\
    & Static        & 0.5  & 4.12  & 0.05  & 3.79  & 4.01  & 78.09 \\
    & Random        & 0.8  & 10.23 & 0.03  & 8.04  & 9.66  & 76.31 \\
    & Random        & 0.5  & 6.75  & 0.06  & 6.09  & 6.54  & 76.46 \\
    & H2O           & 0.8  & 4.20  & 9.10  & 3.68  & 3.98  & 73.46 \\
    & H2O           & 0.5  & 4.20  & 9.10  & 3.68  & 3.98  & 73.46 \\
\bottomrule
\end{tabularx}
\caption{Performance comparison of different pruning methods on CNN summarization task. {\name} consistently outperforms baseline methods across different models and retention ratios.}
\label{tab:summary_results_full}
\end{table*}


\subsection{Comparison with TEAL}
\label{app:teal}

TEAL~\citep{liu2025trainingfreeactivationsparsitylarge} is a closely
related training-free activation-sparsity method. It applies
magnitude-based thresholding to activations supplied to projection
layers. In the attention block, for example, TEAL can be applied to
the inputs of the query, key, and value projections. However, it does
not directly reduce the internal attention matrix products
$QK^\top$ or $PV$, where $P$ denotes the attention-weight matrix.

RMM overlaps with activation-sparsity methods when applied to linear
or MLP projections, since both approaches use input-dependent
activation information to retain a subset of dimensions. The main
difference lies in formulation and scope. RMM defines the retained
set over the shared contraction axis of a general matrix product.
Consequently, the same reduction principle can be applied not only
to projection layers, but also to $QK^\top$ and $PV$, whose contracted
axes correspond to attention-head feature dimensions and token
positions, respectively.

We compare RMM with TEAL on LLaMA~3.1~8B using the same zero-shot
evaluation protocol as in the main experiments. RMM uses a retention
ratio of $\mathrm{RR}=0.7$, and the TEAL threshold is calibrated to
provide a matched nominal reduction level. We consider the following
configurations. \textbf{QKV-Pro-TEAL} applies TEAL to the inputs of
the query, key, and value projection layers.
\textbf{QKV-Pro-RMM} applies RMM to the corresponding projection
matrix products. \textbf{QKV-Attention-RMM} applies RMM to the QKV
projections together with the internal attention products $QK^\top$
and $PV$. \textbf{MLP-TEAL} applies TEAL throughout the MLP block,
whereas \textbf{MLP-Whole-RMM} applies RMM to the Up, Gate, and Down
projections. Finally, \textbf{MLP-Attention-RMM} applies RMM to both
the MLP block and the complete attention-side computation.

\begin{table*}[t]
\centering
\small
\setlength{\tabcolsep}{7pt}
\renewcommand{\arraystretch}{0.95}
\begin{tabular}{lccccc}
\toprule
Method & COPA & ARC-C & ARC-E & PIQA & CommonsenseQA \\
\midrule
Baseline
    & 77.20 & 49.50 & 76.32 & 79.92 & 66.01 \\
\midrule
QKV-Pro-TEAL
    & 75.40 & 43.81 & 75.09 & 77.24 & 59.46 \\
QKV-Pro-RMM
    & 77.00 & 46.82 & 72.81 & 77.48 & 62.65 \\
QKV-Attention-RMM
    & 77.00 & 46.80 & 72.80 & 78.10 & 62.70 \\
\midrule
MLP-TEAL
    & 72.00 & 30.48 & 59.12 & 69.24 & 47.13 \\
MLP-Whole-RMM
    & 70.00 & 31.77 & 57.54 & 71.44 & 48.89 \\
MLP-Attention-RMM
    & 68.40 & 33.11 & 52.28 & 67.85 & 48.73 \\
\bottomrule
\end{tabular}
\caption{
Zero-shot comparison between RMM and TEAL on LLaMA~3.1~8B.
RMM uses $\mathrm{RR}=0.7$, while the TEAL threshold is calibrated
to provide a matched nominal reduction level.
}
\label{tab:teal_comparison}
\end{table*}

RMM does not outperform TEAL in every individual setting. Across the
evaluated tasks, however, RMM is competitive with or stronger than
TEAL in most comparisons. More importantly, the matrix-product
formulation allows RMM to extend the reduction scope beyond
projection inputs to include the internal attention products
$QK^\top$ and $PV$. These results therefore demonstrate both the
overlap between RMM and activation sparsity in projection layers and
the broader operational coverage enabled by RMM.

\subsection{Compute-Normalized Component Analysis}
\label{app:compute_normalized}

We compare the reducibility of different Transformer components on
ARC-Easy using LLaMA~3.1~8B at a matched retention ratio of
$\mathrm{RR}=0.7$. For every targeted matrix product, {\name} retains
the same fraction of the contraction dimension. Because the
theoretical multiplication cost of a matrix product scales linearly
with its contraction dimension, this setting corresponds to a
matched relative compute budget within each targeted operation.

This normalization is relative to the dense computation of each
targeted operation. It does not imply that different components
remove the same absolute number of MACs or the same fraction of
full-model computation, since the operations differ in shape,
frequency, and sequence-length dependence.

In addition to accuracy, we report \textbf{Retained Energy}, defined
as the fraction of activation squared norm preserved by the selected
dimensions. This quantity provides a diagnostic of how much
activation magnitude is captured by the retained subspace.

\begin{table*}[t]
\centering
\small
\setlength{\tabcolsep}{10pt}
\renewcommand{\arraystretch}{0.95}
\begin{tabular}{lcccc}
\toprule
Pruning target
& Accuracy
& Accuracy drop
& RR
& Retained energy \\
\midrule
Baseline
& 76.32 & -- & -- & -- \\
\midrule
Attention-side (full)
& 72.80 & 3.52 & 0.7 & 89.69\% \\
\midrule
MLP Up
& 60.00 & 16.32 & 0.7 & 82.24\% \\
MLP Gate
& 69.12 & 7.20 & 0.7 & 84.61\% \\
MLP Down
& 72.81 & 3.51 & 0.7 & 99.02\% \\
MLP Whole
& 57.54 & 18.78 & 0.7 & 87.85\% \\
\bottomrule
\end{tabular}
\caption{
Compute-normalized component analysis on ARC-Easy using
LLaMA~3.1~8B. All reduction configurations use
$\mathrm{RR}=0.7$, corresponding to the same relative
contraction-axis compute budget within each targeted matrix product.
}
\label{tab:compute_normalized_components}
\end{table*}

Although all configurations use the same relative reduction level,
their accuracy and retained-energy behavior differ substantially.
The three individual MLP projections provide a particularly
controlled comparison because they have the same matrix dimensions.
Reducing the Up projection causes a $16.32$-point accuracy drop,
whereas reducing the Down projection causes only a $3.51$-point
drop. The Gate projection lies between these two cases, with a
$7.20$-point drop. These differences show that MLP reducibility is
strongly projection-dependent rather than being determined by the
retention ratio alone.

Retained activation energy provides a useful diagnostic of this
variation. MLP-Down retains $99.02\%$ of the activation energy and
exhibits the smallest drop among the individual MLP projections,
whereas MLP-Up retains $82.24\%$ and exhibits the largest drop.
Attention-side reduction retains $89.69\%$ of the activation energy
and causes a $3.52$-point accuracy drop, indicating substantial
robustness under the tested reduction setting.

Retained energy alone, however, does not fully determine downstream
performance. Reducing the complete MLP block retains $87.85\%$ of
activation energy but causes a substantially larger accuracy drop
than reducing any individual projection. This suggests that
reduction errors can accumulate across multiple MLP projections and
that the functional role of each projection also affects
reducibility. Overall, the results support component-aware retention
policies: attention-side computation is robust under the tested
setting, while MLP reduction should be applied selectively across
the Up, Gate, and Down projections.

\subsection{Evaluation on Additional Vision--Language Models}
\label{app:additional_vlms}

To examine whether RMM transfers across different multimodal
architectures, we additionally evaluate LLaVA-1.5-7B,
Gemma~3~12B, and InternVL3-8B on POPE. These models represent
different vision--language model families and complement the
Qwen~2.5-VL evaluation presented in the main text.
Table~\ref{tab:additional_vlms} reports accuracy for the dense models
and RMM at retention ratios of $0.8$ and $0.5$.

\begin{table}[t]
\centering
\small
\setlength{\tabcolsep}{7pt}
\renewcommand{\arraystretch}{0.95}
\begin{tabular}{lccc}
\toprule
Model
& RR $=1.0$
& RR $=0.8$
& RR $=0.5$ \\
\midrule
LLaVA-1.5-7B
& 85.00 & 86.00 & 57.33 \\
Gemma~3~12B
& 87.00 & 86.00 & 76.00 \\
InternVL3-8B
& 92.33 & 92.33 & 92.33 \\
\bottomrule
\end{tabular}
\caption{
Accuracy on POPE using three additional vision--language model
backbones under different retention ratios.
}
\label{tab:additional_vlms}
\end{table}

At $\mathrm{RR}=0.8$, all three models remain close to their dense
baselines. LLaVA-1.5-7B changes from $85.00$ to $86.00$,
Gemma~3~12B changes from $87.00$ to $86.00$, and InternVL3-8B
retains its baseline accuracy of $92.33$. Together with the
Qwen~2.5-VL results reported in the main text, these results show
that RMM is not tied to a single VLM architecture and can be applied
across multiple multimodal model families while preserving
dense-level performance under moderate reduction.

Under the more aggressive $\mathrm{RR}=0.5$ setting, the degree of
tolerance varies across architectures. LLaVA-1.5-7B exhibits a
larger drop, Gemma~3~12B remains moderately robust, and InternVL3-8B
remains stable in this evaluation. Overall, these results support the
cross-architecture applicability of the RMM matrix-product reduction
principle, while indicating that the appropriate retention ratio
should be selected according to the target model.

\subsection{Compatibility with INT8 Weight Quantization}
\label{app:int8}

To examine whether RMM remains applicable when combined with weight
quantization, we evaluate attention-side RMM on LLaMA~3.1~8B loaded
using bitsandbytes INT8 weight quantization. All configurations are
evaluated on COPA using the same quantized loading and evaluation
protocol.

The model weights remain quantized throughout inference, while RMM is
applied only to the attention-internal matrix products $QK^\top$ and
$PV$. The unreduced configuration serves as the INT8 baseline. This
setting directly tests whether RMM can operate on the attention
activations produced by an INT8-quantized model without task-specific
training or modification of the quantized weights.

\begin{table}[t]
\centering
\small
\setlength{\tabcolsep}{12pt}
\renewcommand{\arraystretch}{0.95}
\begin{tabular}{lcc}
\toprule
Method & Retention ratio & Accuracy \\
\midrule
INT8 baseline & 1.0 & 81.40 \\
INT8 + RMM & 0.8 & 77.40 \\
INT8 + RMM & 0.5 & 73.00 \\
\bottomrule
\end{tabular}
\caption{
COPA accuracy of attention-side RMM applied to an INT8-quantized
LLaMA~3.1~8B model. All configurations use the same quantized
loading and evaluation protocol.
}
\label{tab:int8_compatibility}
\end{table}

As shown in Table~\ref{tab:int8_compatibility}, attention-side RMM
remains applicable when the model weights are quantized to INT8.
Under moderate reduction, the quantized model retains most of its
baseline performance. As the retention ratio decreases further,
accuracy degrades accordingly, preserving the expected
accuracy--reduction trade-off. These results show that the behavior
of RMM remains controllable in the quantized setting.

Overall, the experiment demonstrates that attention-side RMM can be
used alongside INT8 weight quantization. The two methods act on
different aspects of inference: weight quantization reduces the
numerical precision used to store and process model weights, whereas
RMM reduces the active contraction-axis computation in the
attention-internal matrix products. This experiment establishes
empirical compatibility rather than additional latency gains from a
jointly optimized low-bit RMM kernel. Developing such fused
quantized kernels is left for future work.

\subsection{Latency on LLaMA~3.1~70B}
\label{app:latency_70b}

We additionally evaluate the end-to-end latency of RMM on
LLaMA~3.1~70B using the same evaluation standard as in the main
latency experiment. The evaluation uses a batch size of 1 and a
retention ratio of $\mathrm{RR}=0.8$. The dense and RMM
implementations use the same model precision, hardware allocation,
device mapping, and parallelization configuration.

\begin{table}[t]
\centering
\small
\setlength{\tabcolsep}{4pt}
\renewcommand{\arraystretch}{0.95}
\begin{tabular}{lccc}
\toprule
Seq. length & Dense (ms) & RMM (ms) & Speedup \\
\midrule
1024 & 384.46 & 373.29 & 1.03$\times$ \\
2048 & 823.06 & 584.21 & 1.41$\times$ \\
4096 & OOM & 1001.45 & -- \\
\bottomrule
\end{tabular}
\caption{
End-to-end latency of dense and RMM inference on
LLaMA~3.1~70B. At sequence length 4096, RMM completes inference,
whereas the dense implementation runs out of memory under the
evaluated hardware configuration.
}
\label{tab:latency_70b}
\end{table}

The speedup is limited at sequence length 1024, where selection and
kernel-launch overheads constitute a larger fraction of the total
runtime. The benefit becomes more pronounced at sequence length 2048.
At sequence length 4096, the dense implementation runs out of memory
under the evaluated configuration, while the RMM implementation
completes inference.

The 4096-token result should not be interpreted as model-weight
compression. RMM reduces the active computation and may reduce the
intermediate or workspace requirements of the affected matrix
products, but it does not reduce the number of model parameters or
the memory required to store the model weights.

Our current implementation replaces selected attention operations
with custom Triton kernels and is not yet fully fused with every
Transformer component, inference framework, or quantized backend.
Consequently, end-to-end gains depend on sequence length, hardware,
backend, and kernel-integration overhead. More complete framework
integration and dedicated low-bit kernels remain future work.

\subsection{Selecting the Retention Ratio}
\label{app:rr_selection}

RMM is training-free but not hyperparameter-free. The retention ratio
specifies the desired accuracy--efficiency trade-off at deployment
time, analogous to selecting a sparsity level in pruning or a
bit-width in quantization. The appropriate retention ratio is
therefore not expected to be universal across models, tasks, or
Transformer components.

When labeled downstream validation data are unavailable, the
retention ratio can be selected using deployment constraints or
unlabeled consistency measurements. If a target compute or latency
budget is known, users can select a retention ratio based on the
corresponding reduced contraction dimension and then verify whether
the resulting model satisfies the deployment requirement.

Alternatively, users can perform a small label-free consistency
sweep. Given a collection of unlabeled prompts, the dense model and
RMM are run using the same decoding configuration. Their generated
outputs or output distributions are then compared, and the smallest
retention ratio satisfying a desired agreement threshold is selected.

As a preliminary example, we use 100 unlabeled Wikipedia passages
with LLaMA~3.1~8B and generate up to 10 new tokens using greedy
decoding. At $\mathrm{RR}=0.7$, RMM produces exactly the same
continuation as the dense model for 87 out of the 100 passages,
corresponding to a sequence-level exact agreement of $87\%$. This
provides a simple label-free diagnostic for identifying a retention
ratio that preserves the behavior of the dense model under the
evaluated generation setting.

The component-wise results further suggest that a single retention
ratio need not be shared across the entire model. Attention-side
operations can often use a more aggressive reduction, whereas MLP
projections may require more conservative and projection-specific
ratios. We therefore view $\mathrm{RR}=0.7$ as a useful empirical
starting point for attention-side reduction in the evaluated setting,
rather than as a universal default. In practical deployment, we
recommend a small unlabeled consistency sweep whenever representative
unlabeled inputs are available.

\section{Ablation Study Details}
\label{sec:Ablation Study Details}

\paragraph{Sensitivity to different components.}

We provide comprehensive component-wise ablation results on
LLaMA~3.1~8B. Figure~\ref{fig:component_sensitivity} summarizes the
main component-wise trends, while
Table~\ref{tab:comprehensive_pruning} reports the complete numerical
results. The experiments apply RMM to different projection and
attention-internal matrix products, both individually and in
combination, to examine how reducibility varies across Transformer
components.

\paragraph{Definition of pruning targets.}

We define each pruning target according to the matrix products to
which RMM is applied. \textbf{Q Projection} applies RMM only to the
query projection $XW_Q$. \textbf{QKV Projection} applies RMM to all
three query, key, and value projections, $XW_Q$, $XW_K$, and $XW_V$.
\textbf{Attention} applies RMM to the attention-internal matrix
products $QK^\top$ and $PV$, where $P$ denotes the attention-weight
matrix.

The hybrid configurations apply RMM to the union of their
corresponding targets. \textbf{Attention\&Q} combines the query
projection with $QK^\top$ and $PV$, while
\textbf{Attention\&QKV} combines all QKV projections with the two
attention-internal matrix products.

For the MLP block, \textbf{MLP Up}, \textbf{MLP Gate}, and
\textbf{MLP Down} apply RMM only to the Up, Gate, and Down
projection matrix products, respectively. \textbf{Whole MLP} applies
RMM to all three MLP projections. Finally,
\textbf{MLP\&Attention} combines Whole-MLP reduction with reduction
of the attention-internal products $QK^\top$ and $PV$. In each
hybrid configuration, the reported retention ratio is applied to
every included matrix product.

\begin{figure*}[t]
    \centering
    \includegraphics[width=\textwidth]
    {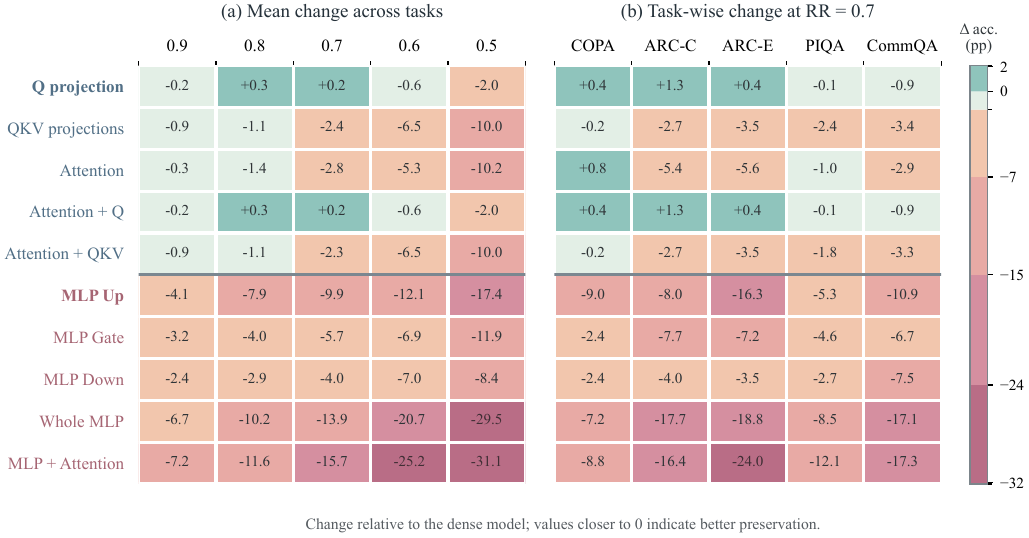}
    \caption{
    Component-wise sensitivity of RMM on LLaMA~3.1~8B.
    Panel (a) reports the average accuracy change relative to the
    dense model across five QA benchmarks under different retention
    ratios. Panel (b) reports the task-specific accuracy change at
    $\mathrm{RR}=0.7$. Values closer to zero indicate better
    preservation of dense-model performance. Attention-side
    operations remain comparatively robust, whereas MLP-side
    sensitivity varies substantially across projections and becomes
    more pronounced when multiple MLP projections are reduced
    jointly. Complete numerical results are provided in
    Table~\ref{tab:comprehensive_pruning}.
    }
    \label{fig:component_sensitivity}
\end{figure*}

\paragraph{Attention-side reduction.}

Attention-side operations are comparatively robust under the tested
retention ratios. Applying RMM only to the query projection produces
the most stable behavior, while extending reduction to all QKV
projections or directly to the attention-internal products introduces
a more noticeable but generally controlled degradation as the
retention ratio decreases.

The Attention\&Q and Attention\&QKV configurations closely follow
their corresponding projection-only variants. This indicates that
adding reduction to $QK^\top$ and $PV$ introduces limited additional
degradation in these settings, providing further evidence that the
attention-internal matrix products contain substantial reducibility.
Overall, the results support attention-side computation as a
comparatively robust target for moderate reduction.

\paragraph{MLP-side reduction.}

MLP reducibility is strongly projection-dependent. The Up projection
is the most sensitive among the individual MLP projections, the Gate
projection exhibits intermediate sensitivity, and the Down
projection is substantially more robust. This variation shows that
the MLP block should not be treated as a uniformly sensitive or
uniformly reducible component.

Applying RMM to the complete MLP block produces considerably larger
degradation than reducing any individual MLP projection. This
suggests that approximation effects accumulate when the Up, Gate,
and Down projections are reduced simultaneously. MLP reduction
should therefore be applied selectively, with retention ratios chosen
according to the specific projection.

\paragraph{Hybrid strategies.}

Combining attention-side and Whole-MLP reduction produces stronger
degradation than reducing either side selectively. This indicates
that applying the same aggressive reduction uniformly across the
entire Transformer is not an effective deployment strategy.
Instead, hybrid configurations should use more aggressive reduction
for robust attention-side operations and more conservative,
projection-specific reduction for sensitive MLP components.

\paragraph{Takeaways.}

The ablations provide three main findings. First, attention-side
matrix products exhibit substantial reducibility, particularly under
moderate reduction. Second, MLP reducibility varies considerably
across the Up, Gate, and Down projections, while simultaneously
reducing the complete MLP block is substantially more damaging.
Third, effective deployment should use heterogeneous retention
policies rather than a single uniformly aggressive ratio across all
components. These findings reveal a structural asymmetry in how
redundancy is distributed across Transformer matrix products and
motivate component-aware application of RMM.

\begin{table*}[t]
\centering
\scriptsize
\renewcommand{\arraystretch}{1.0}
\setlength{\tabcolsep}{3pt}
\begin{tabularx}{\textwidth}{llXXXXXX}
\toprule
\textbf{Method} & \textbf{Ratio} & \textbf{Copa} & \textbf{ARC-C} & \textbf{ARC-E} & \textbf{PiQA} & \textbf{CommQA} & \textbf{AVG} \\
\midrule
\multirow{6}{*}{\makecell[l]{\textbf{Prune Q}\\\textbf{Projection}}}
    & Baseline & 77.2 & 49.5 & 76.32 & 79.92 & 66.01 & 69.79 \\
    & 0.9 & 77.4 & 48.16 & 76.67 & 79.6 & 66.18 & 69.60 \\
    & 0.8 & 77.6 & 49.83 & 76.84 & 79.71 & 66.42 & 70.08 \\
    & 0.7 & 77.6 & 50.84 & 76.67 & 79.82 & 65.11 & 70.01 \\
    & 0.6 & 77.2 & 47.83 & 76.14 & 78.94 & 65.68 & 69.16 \\
    & 0.5 & 77.2 & 44.15 & 73.86 & 79.27 & 64.54 & 67.80 \\
\midrule
\multirow{6}{*}{\makecell[l]{\textbf{Prune QKV}\\\textbf{Projection}}}
    & Baseline & 77.2 & 49.5 & 76.32 & 79.92 & 66.01 & 69.79 \\
    & 0.9 & 76.6 & 48.16 & 75.26 & 79.16 & 65.44 & 68.92 \\
    & 0.8 & 77.2 & 47.49 & 75.09 & 79.05 & 64.70 & 68.71 \\
    & 0.7 & 77.0 & 46.82 & 72.81 & 77.48 & 62.65 & 67.35 \\
    & 0.6 & 73.4 & 37.46 & 68.60 & 77.48 & 59.38 & 63.26 \\
    & 0.5 & 70.6 & 36.79 & 62.98 & 76.65 & 51.92 & 59.79 \\
\midrule
\multirow{6}{*}{\makecell[l]{\textbf{Prune}\\\textbf{Attention}}}
    & Baseline & 77.2 & 49.5 & 76.32 & 79.92 & 66.01 & 69.79 \\
    & 0.9 & 78.2 & 48.49 & 75.79 & 79.22 & 65.57 & 69.45 \\
    & 0.8 & 74.8 & 47.83 & 75.44 & 79.05 & 64.95 & 68.41 \\
    & 0.7 & 78.0 & 44.15 & 70.70 & 78.89 & 63.14 & 66.98 \\
    & 0.6 & 76.6 & 39.13 & 68.42 & 78.73 & 59.71 & 64.52 \\
    & 0.5 & 74.6 & 32.11 & 62.11 & 74.76 & 54.22 & 59.56 \\
\midrule
\multirow{6}{*}{\makecell[l]{\textbf{Prune}\\\textbf{Attention\&Q}}}
    & Baseline & 77.2 & 49.5 & 76.32 & 79.92 & 66.01 & 69.79 \\
    & 0.9 & 77.4 & 48.16 & 76.67 & 79.60 & 66.18 & 69.60 \\
    & 0.8 & 77.6 & 49.83 & 76.84 & 79.71 & 66.42 & 70.08 \\
    & 0.7 & 77.6 & 50.84 & 76.67 & 79.82 & 65.11 & 70.01 \\
    & 0.6 & 77.2 & 47.83 & 76.14 & 78.94 & 65.68 & 69.16 \\
    & 0.5 & 77.2 & 44.15 & 73.86 & 79.27 & 64.54 & 67.80 \\
\midrule
\multirow{6}{*}{\makecell[l]{\textbf{Prune}\\\textbf{Attention\&QKV}}}
    & Baseline & 77.2 & 49.5 & 76.32 & 79.92 & 66.01 & 69.79 \\
    & 0.9 & 76.6 & 48.20 & 75.30 & 79.20 & 65.40 & 68.94 \\
    & 0.8 & 77.2 & 47.50 & 75.10 & 79.10 & 64.70 & 68.72 \\
    & 0.7 & 77.0 & 46.80 & 72.80 & 78.10 & 62.70 & 67.48 \\
    & 0.6 & 73.4 & 37.50 & 68.60 & 77.50 & 59.40 & 63.28 \\
    & 0.5 & 70.6 & 36.80 & 63.00 & 76.60 & 51.90 & 59.78 \\
\midrule
\multirow{6}{*}{\makecell[l]{\textbf{Prune MLP}\\\textbf{Up}}}
    & Baseline & 77.2 & 49.5 & 76.32 & 79.92 & 66.01 & 69.79 \\
    & 0.9 & 73.0 & 46.49 & 70.70 & 78.07 & 60.20 & 65.69 \\
    & 0.8 & 73.6 & 38.46 & 62.98 & 76.17 & 58.31 & 61.90 \\
    & 0.7 & 68.2 & 41.47 & 60.00 & 74.59 & 55.12 & 59.88 \\
    & 0.6 & 69.0 & 34.55 & 58.07 & 73.72 & 53.15 & 57.70 \\
    & 0.5 & 66.8 & 28.76 & 51.05 & 69.15 & 46.44 & 52.44 \\
\midrule
\multirow{6}{*}{\makecell[l]{\textbf{Prune MLP}\\\textbf{Gate}}}
    & Baseline & 77.2 & 49.5 & 76.32 & 79.92 & 66.01 & 69.79 \\
    & 0.9 & 73.6 & 44.82 & 74.74 & 79.33 & 60.69 & 66.64 \\
    & 0.8 & 74.2 & 47.16 & 70.00 & 77.42 & 60.03 & 65.76 \\
    & 0.7 & 74.8 & 41.81 & 69.12 & 75.35 & 59.30 & 64.08 \\
    & 0.6 & 74.4 & 42.81 & 66.49 & 74.27 & 56.51 & 62.90 \\
    & 0.5 & 69.4 & 34.11 & 63.86 & 71.49 & 50.61 & 57.89 \\
\midrule
\multirow{6}{*}{\makecell[l]{\textbf{Prune MLP}\\\textbf{Down}}}
    & Baseline & 77.2 & 49.5 & 76.32 & 79.92 & 66.01 & 69.79 \\
    & 0.9 & 76.8 & 47.16 & 73.33 & 78.84 & 61.02 & 67.43 \\
    & 0.8 & 76.0 & 46.49 & 73.51 & 78.89 & 59.71 & 66.92 \\
    & 0.7 & 74.8 & 45.48 & 72.81 & 77.20 & 58.48 & 65.75 \\
    & 0.6 & 74.0 & 42.81 & 66.84 & 76.71 & 53.71 & 62.81 \\
    & 0.5 & 72.0 & 40.80 & 64.04 & 76.22 & 53.73 & 61.36 \\
\midrule
\multirow{6}{*}{\makecell[l]{\textbf{Prune}\\\textbf{Whole MLP}}}
    & Baseline & 77.2 & 49.5 & 76.32 & 79.92 & 66.01 & 69.79 \\
    & 0.9 & 69.6 & 43.81 & 67.02 & 76.66 & 58.23 & 63.06 \\
    & 0.8 & 68.6 & 37.46 & 61.75 & 75.39 & 54.55 & 59.55 \\
    & 0.7 & 70.0 & 31.77 & 57.54 & 71.44 & 48.89 & 55.93 \\
    & 0.6 & 65.2 & 25.08 & 48.95 & 63.76 & 42.42 & 49.08 \\
    & 0.5 & 53.6 & 23.75 & 35.44 & 57.56 & 31.04 & 40.28 \\
\midrule
\multirow{6}{*}{\makecell[l]{\textbf{Prune}\\\textbf{MLP\&Attention}}}
    & Baseline & 77.2 & 49.5 & 76.32 & 79.92 & 66.01 & 69.79 \\
    & 0.9 & 69.6 & 41.47 & 65.61 & 77.15 & 58.97 & 62.56 \\
    & 0.8 & 69.8 & 35.12 & 56.32 & 73.72 & 55.86 & 58.16 \\
    & 0.7 & 68.4 & 33.11 & 52.28 & 67.85 & 48.73 & 54.07 \\
    & 0.6 & 56.6 & 27.76 & 44.56 & 57.51 & 36.69 & 44.62 \\
    & 0.5 & 56.4 & 25.08 & 31.05 & 52.88 & 28.01 & 38.68 \\
\bottomrule
\end{tabularx}
\caption{Comprehensive component-wise RMM ablations on LLaMA~3.1~8B across
five QA benchmarks. Each group reports RMM applied to the indicated
matrix product or combination of products under different retention
ratios.}
\label{tab:comprehensive_pruning}
\end{table*}

\section{Efficiency Analysis}
\label{sec:Efficiency Analysis}

\subsection{Complexity Analysis}

We analyze how {\name} affects the principal matrix products in
attention and MLP computation. Let $N$ denote the sequence length,
$D$ the attention-head dimension, $d_v$ the value dimension, $d$ the
model hidden dimension, and $m$ the intermediate width of the MLP.
For a matrix product whose contraction dimension has size $s$, we
write $\rho=k/s$ for the fraction of contraction-axis indices
retained by {\name}. The analysis below describes the arithmetic cost
of the targeted matrix products; selection operations such as norm
computation, TopK, and gather introduce additional overhead.

\paragraph{(i) Attention.}

Self-attention contains two principal matrix products: the
attention-score computation $QK^\top$ and the value aggregation
$PV$, where $P=\mathrm{softmax}(QK^\top)$ denotes the attention-weight
matrix.

For a sequence of length $N$, computing $QK^\top$ over a head
dimension of size $D$ has complexity $O(N^2D)$. If {\name} retains
$K=\rho_{\mathrm{feat}}D$ feature dimensions along the shared
contraction axis, the arithmetic cost of the reduced score product
becomes
\begin{equation}
O(N^2K)
=
O(\rho_{\mathrm{feat}}N^2D).
\end{equation}
This reduces the theoretical MACs of the score matrix multiplication
approximately in proportion to $\rho_{\mathrm{feat}}$, while the
output score matrix remains of size $N\times N$.

For value aggregation, $P\in\mathbb{R}^{N\times N}$ is multiplied by
$V\in\mathbb{R}^{N\times d_v}$. The dense multiplication has
complexity $O(N^2d_v)$. If {\name} retains
$\ell=\rho_{\mathrm{tok}}N$ token positions along the shared token
axis, the reduced multiplication has complexity
\begin{equation}
O(N\ell d_v)
=
O(\rho_{\mathrm{tok}}N^2d_v).
\end{equation}
Thus, the theoretical MACs of $QK^\top$ and $PV$ scale approximately
with $\rho_{\mathrm{feat}}$ and $\rho_{\mathrm{tok}}$, respectively.
These reductions apply to the targeted matrix products and do not
include the cost of selection, softmax, or other attention
operations.

During autoregressive decoding at step $t$, the current query attends
to $t$ cached keys and values. The score-product cost decreases from
$O(tD)$ to
$O(\rho_{\mathrm{feat}}tD)$, while the value-aggregation cost
decreases from $O(td_v)$ to
$O(\rho_{\mathrm{tok}}td_v)$.

With a suitable kernel and data layout, feature selection can reduce
the key features used by the score product, and token selection can
reduce the value vectors accessed during value aggregation. However,
the attention scores must still be computed over the available keys
before value-side token selection. Consequently, {\name} does not
reduce the size of the stored KV cache, and the K-cache access
required for attention-score computation is not eliminated. The
system-level reduction in cache traffic therefore depends on the
specific operation, kernel implementation, and memory layout.

\paragraph{(ii) MLPs.}

RMM applies contraction-axis selection separately to each MLP
projection. Consider a general projection
\begin{equation}
A_p W_p,
\qquad
A_p\in\mathbb{R}^{N\times d_{\mathrm{in},p}},
\quad
W_p\in\mathbb{R}^{d_{\mathrm{in},p}\times d_{\mathrm{out},p}}.
\end{equation}
Its dense arithmetic cost is
$O(Nd_{\mathrm{in},p}d_{\mathrm{out},p})$. Retaining
$k_p=\rho_p d_{\mathrm{in},p}$ indices along the contraction axis
reduces the target matrix-product cost to
\begin{equation}
O(Nk_p d_{\mathrm{out},p})
=
O(\rho_p N d_{\mathrm{in},p}d_{\mathrm{out},p}).
\end{equation}

For the gated MLP architecture evaluated in this work, the Up and
Gate projections map from the model dimension $d$ to the intermediate
dimension $m$, whereas the Down projection maps from $m$ back to $d$.
Accordingly, {\name} selects from the model hidden dimension for the
Up and Gate projections and from the intermediate dimension for the
Down projection. Each projection uses a retained set determined from
its own input activation; the Up, Gate, and Down projections do not
share a single common index set.

When the same retention ratio is applied to all three projections,
the theoretical MACs of their targeted matrix multiplications are
reduced by approximately the same relative fraction. The output
dimensions of the projections remain unchanged, and the full model
weights remain stored.

\paragraph{Comparison with other reduction strategies.}

Unlike token-level pruning, {\name} does not shorten the input
sequence. Unlike weight pruning, it does not permanently remove model
parameters or reduce the memory required to store the full model
weights. Instead, {\name} reduces the active contraction-axis
computation of selected matrix products using indices determined from
the current activations.

A shared retention ratio provides a simple control over the relative
arithmetic cost of each targeted operation. Component-specific ratios
can also be used because different Transformer components exhibit
different reduction sensitivities. The resulting reduction in total
model computation depends on which operations are targeted, their
shapes and execution frequencies, and the sequence length.

\paragraph{Practical implications.}

The theoretical complexity reductions above describe the reduced
matrix multiplications themselves. Realizing wall-clock gains also
requires efficient implementations of activation scoring, TopK
selection, indexing, and the reduced products. Our latency experiments
use custom Triton kernels for selected attention operations and show
that the theoretical savings can produce practical speedups under the
evaluated settings, particularly at longer sequence lengths.

The magnitude of the end-to-end gain depends on sequence length,
batch size, hardware, data layout, kernel fusion, and selection
overhead. Our current implementation is not fully integrated with all
Transformer operations, inference frameworks, or quantized backends.
Moreover, {\name} reduces active computation but does not reduce model
parameter count or stored weight memory.

\paragraph{Summary.}

For each targeted matrix product, retaining a fraction $\rho$ of the
contraction dimension reduces its theoretical arithmetic cost to
approximately a fraction $\rho$ of the dense product, excluding
selection overhead. In attention, this principle applies to the
feature axis of $QK^\top$ and the token axis of $PV$; in MLP blocks,
it applies separately to the contraction axis of each projection.
The corresponding full-model and wall-clock benefits depend on the
selected components and their system-level implementation. This
provides a controllable, input-adaptive accuracy--efficiency trade-off
without modifying or permanently removing model weights.

\section{Theoretical Analysis of RMM}
\label{app:theory}

We provide theoretical justification for the design of {\name}. We first establish that TopK selection by column norm is minimax optimal under a natural constraint (Section~\ref{app:minimax}), then derive the approximation error bound (Section~\ref{app:error_bound}).

\subsection{Minimax Optimality of Activation-Aware Selection}
\label{app:minimax}

In Transformer inference, the activation matrix $A$ is observed, while the other operand $B$ is not known at selection time. We show that under this information asymmetry, TopK selection by column norm is the optimal dimension selection strategy.

We decompose the matrix product $AB$ as a sum of rank-one terms over the shared dimension:
\begin{equation}
AB = \sum_{j=1}^d A_{:,j}\,B_{j,:}.
\end{equation}
Each term $A_{:,j}\,B_{j,:}$ contributes independently along dimension $j$, with contribution magnitude $\|A_{:,j}\,B_{j,:}\|_F = \|A_{:,j}\|_2\,\|B_{j,:}\|_2$. When we discard dimension $j$, the error contribution from that dimension is $\|A_{:,j}\|_2\,\|B_{j,:}\|_2$. Since we observe $A$ but not $B$, we formulate the selection problem as a minimax game over independent per-dimension adversaries.

\begin{theorem}[Minimax Optimality]
\label{thm:minimax}
Let $A\in\mathbb{R}^{n\times d}$ be a fixed activation matrix, and let $k\in\{1,\dots,d\}$. Define
\begin{equation}
\label{eq:minimax}
\mathcal{I}^* = \arg\min_{|\mathcal{I}|=k}\; \max_{\substack{b_j\ge 0,\; j=1,\dots,d \\ \sum_{j=1}^d b_j^2 \le 1}} \sum_{j\notin\mathcal{I}} \|A_{:,j}\|_2\, b_j,
\end{equation}
where $b_j = \|B_{j,:}\|_2$ represents the unknown row energy of $B$. Then $\mathcal{I}^* = \operatorname{TopK}\bigl(\{\|A_{:,j}\|_2\}_{j=1}^d,\,k\bigr)$.
\end{theorem}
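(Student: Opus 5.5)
The plan is to evaluate the inner maximization in closed form for each fixed $\mathcal{I}$, which reduces the minimax problem to a simple combinatorial minimization. Write $\alpha_j=\|A_{:,j}\|_2\ge 0$ and $\bar{\mathcal{I}}=[d]\setminus\mathcal{I}$.

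\textbf{Step 1: solve the inner maximum.} For fixed $\mathcal{I}$, the objective $\sum_{j\in\bar{\mathcal{I}}}\alpha_j b_j$ does not involve $b_j$ for $j\in\mathcal{I}$. Any mass placed on those coordinates only uses up budget, so an optimal adversary sets $b_j=0$ on $\mathcal{I}$.

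The remaining problem is to maximize the inner product of $(\alpha_j)_{j\in\bar{\mathcal{I}}}$ with a vector in the nonnegative part of the unit ball. Cauchy--Schwarz bounds this by $\|A_{:,\bar{\mathcal{I}}}\|_F=(\sum_{j\in\bar{\mathcal{I}}}\alpha_j^2)^{1/2}$. The bound is attained by $b_j=\alpha_j/\|A_{:,\bar{\mathcal{I}}}\|_F$ on $\bar{\mathcal{I}}$, which is admissible because $\alpha_j\ge0$. In the degenerate case $\|A_{:,\bar{\mathcal{I}}}\|_F=0$, both sides are $0$. This recovers Eq.~\eqref{eq:main_minimax}.

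\textbf{Step 2: reduce the outer minimization.} Minimizing $\|A_{:,\bar{\mathcal{I}}}\|_F$ is equivalent to minimizing its square, since squaring is monotone on $[0,\infty)$. The square equals $\sum_j\alpha_j^2-\sum_{j\in\mathcal{I}}\alpha_j^2$. The total $\sum_j\alpha_j^2$ is fixed, so the problem becomes maximizing $\sum_{j\in\mathcal{I}}\alpha_j^2$ over $|\mathcal{I}|=k$.

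\textbf{Step 3: show TopK is optimal.} Use an exchange argument. Suppose $\mathcal{I}$ contains some $i$ and omits some $j$ with $\alpha_j>\alpha_i$. Swapping $i$ for $j$ strictly increases the sum. Therefore any maximizer consists of $k$ indices whose squared norms are each at least as large as every excluded one, and that is exactly $\operatorname{TopK}(\{\alpha_j\},k)$. Because $t\mapsto t^2$ is increasing on $t\ge0$, ranking by $\alpha_j^2$ is the same as ranking by $\alpha_j$.

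\textbf{Main obstacle: ties.} When several $\alpha_j$ are equal at the $k$-th position, the $\arg\min$ is a set of index sets rather than a single set. The statement $\mathcal{I}^*=\operatorname{TopK}$ then has to be read with TopK breaking ties arbitrarily. Concretely, I will prove that every TopK selection attains the minimax value, and that every minimizer is a valid TopK selection. Steps 1 and 2 are routine; being precise about these tie cases is the only delicate point.
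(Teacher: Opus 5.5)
Your proof is correct and follows essentially the same route as the paper: you evaluate the inner maximum by Cauchy--Schwarz to get $\|A_{:,\bar{\mathcal{I}}}\|_F$, and then observe that minimizing the discarded squared column norms is equivalent to maximizing the retained ones, which TopK achieves. Three of your details are only asserted in the paper, not proved: the exchange argument, the degenerate case $\|A_{:,\bar{\mathcal{I}}}\|_F=0$, and reading $\arg\min$ as a set of TopK selections when there are ties at the $k$-th position. Including them makes your argument more complete than the paper's.
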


\begin{proof}
By Proposition~\ref{prop:rmm_bound}, the approximation error satisfies
\begin{equation}
\|AB - A_{:,\mathcal{I}}B_{\mathcal{I},:}\|_F \;\leq\; \sum_{j\notin\mathcal{I}} \|A_{:,j}\|_2\,\|B_{j,:}\|_2.
\end{equation}
This upper bound depends on $B$ only through the row norms $b_j = \|B_{j,:}\|_2$. We therefore analyze the minimax problem over this upper bound, which yields a tractable upper-bound surrogate for the original approximation problem.

Denote $\alpha_j = \|A_{:,j}\|_2$ and $\bar{\mathcal{I}} = [d]\setminus\mathcal{I}$. For fixed $\mathcal{I}$, the inner maximization is
\begin{equation}
\max_{\substack{b_j\ge 0\\\sum_j b_j^2\le 1}} \sum_{j\in\bar{\mathcal{I}}} \alpha_j\, b_j.
\end{equation}
By the Cauchy--Schwarz inequality, $\sum_{j\in\bar{\mathcal{I}}} \alpha_j\,b_j \le \bigl(\sum_{j\in\bar{\mathcal{I}}}\alpha_j^2\bigr)^{1/2}\bigl(\sum_{j\in\bar{\mathcal{I}}}b_j^2\bigr)^{1/2}$, with equality when $b_j\propto\alpha_j$ for $j\in\bar{\mathcal{I}}$ and $b_j=0$ for $j\in\mathcal{I}$. Since the adversary can place all energy on $\bar{\mathcal{I}}$ and set $\sum_{j\in\bar{\mathcal{I}}}b_j^2=1$, the inner maximum equals
\begin{equation}
\label{eq:inner_value}
\biggl(\sum_{j\in\bar{\mathcal{I}}} \|A_{:,j}\|_2^2\biggr)^{1/2} = \|A_{:,\bar{\mathcal{I}}}\|_F.
\end{equation}

The outer minimization then becomes
\begin{equation}
\min_{|\mathcal{I}|=k}\; \|A_{:,\bar{\mathcal{I}}}\|_F = \min_{|\mathcal{I}|=k}\;\biggl(\sum_{j\notin\mathcal{I}} \|A_{:,j}\|_2^2\biggr)^{1/2}.
\end{equation}
Minimizing $\sum_{j\notin\mathcal{I}}\|A_{:,j}\|_2^2$ over all subsets $\mathcal{I}$ of size $k$ is equivalent to maximizing $\sum_{j\in\mathcal{I}}\|A_{:,j}\|_2^2$, which is achieved by selecting the $k$ dimensions with the largest column norms:
\begin{equation}
\mathcal{I}^* = \operatorname{TopK}\bigl(\{\|A_{:,j}\|_2\}_{j=1}^d,\,k\bigr).
\end{equation}
\end{proof}

\paragraph{Remark 1 (Tightness).}
The inner maximization~\eqref{eq:inner_value} is achieved by setting $b_j = \alpha_j / \|A_{:,\bar{\mathcal{I}}}\|_F$ for $j\in\bar{\mathcal{I}}$ and $b_j = 0$ otherwise. This corresponds to a matrix $B$ whose row norms are proportional to the activation column norms in the discarded dimensions---precisely the worst case for any fixed selection. TopK selection minimizes the impact of this worst case.

\paragraph{Remark 2 (Interpretation).}
Theorem~\ref{thm:minimax} states that among all deterministic selection rules that observe only $A$ and retain $k$ dimensions, TopK by column norm minimizes the worst-case error bound. The result holds for any $B$ and does not require assumptions on the structure of $B$. This provides a principled justification for the design of {\name}: the selection rule is minimax optimal for the stated activation-only upper-bound surrogate under the information asymmetry inherent in Transformer inference.

\paragraph{Remark 3} We note that the optimality established here is with respect to selection rules that depend only on $A$. If $B$ were also observable at selection time, a jointly optimal rule could achieve lower error. However, in Transformer inference, the selection must be made before the matrix multiplication is executed, making the one-sided setting the natural formulation.

\subsection{Approximation Error Bound}
\label{app:error_bound}

\begin{proposition}[RMM Approximation Error Bound]
\label{prop:rmm_bound}
Let $A \in \mathbb{R}^{n \times d}$ and $B \in \mathbb{R}^{d \times m}$, and let $\mathcal{I} \subseteq [d]$ with $|\mathcal{I}| = \lceil \rho d \rceil$ be the index set selected by RMM. Denote the complement $\bar{\mathcal{I}} = [d] \setminus \mathcal{I}$. Then the approximation error satisfies
\begin{equation}
\label{eq:rmm_error}
\| AB - A_{:,\mathcal{I}}\, B_{\mathcal{I},:} \|_F \;\leq\; \sum_{j \in \bar{\mathcal{I}}} \| A_{:,j} \|_2 \, \| B_{j,:} \|_2.
\end{equation}
\end{proposition}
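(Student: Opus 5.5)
The plan is to expand the product along its shared contraction axis and then apply the triangle inequality. First I will write $AB$ as a sum of outer products,
\[
AB=\sum_{j=1}^{d}A_{:,j}B_{j,:},
\]
which follows directly from $(AB)_{ik}=\sum_j A_{ij}B_{jk}$. In the same way, $A_{:,\mathcal{I}}B_{\mathcal{I},:}=\sum_{j\in\mathcal{I}}A_{:,j}B_{j,:}$, since restricting the contraction to $\mathcal{I}$ keeps exactly those rank-one terms. Subtracting the two expansions, the error matrix is exactly the sum of the discarded terms,
\[
AB-A_{:,\mathcal{I}}B_{\mathcal{I},:}=\sum_{j\in\bar{\mathcal{I}}}A_{:,j}B_{j,:}.
\]

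Next I will take Frobenius norms. The Frobenius norm is a norm (it is the Euclidean norm on $\mathbb{R}^{n\times m}$ viewed as a vector space), so the triangle inequality gives
\[
\Bigl\|\sum_{j\in\bar{\mathcal{I}}}A_{:,j}B_{j,:}\Bigr\|_F\le\sum_{j\in\bar{\mathcal{I}}}\|A_{:,j}B_{j,:}\|_F .
\]

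The remaining step is to evaluate the Frobenius norm of a rank-one outer product $uv^\top$ with $u=A_{:,j}\in\mathbb{R}^n$ and $v^\top=B_{j,:}$. I will compute it entrywise:
\[
\|uv^\top\|_F^2=\sum_{i,k}u_i^2v_k^2=\|u\|_2^2\,\|v\|_2^2 ,
\]
so $\|A_{:,j}B_{j,:}\|_F=\|A_{:,j}\|_2\|B_{j,:}\|_2$. Substituting this into the triangle-inequality bound yields the claimed inequality.

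None of these steps is a real obstacle; the only point needing care is the exact-identity step. The identity holds for any index set $\mathcal{I}$, so the bound does not depend on TopK selection or on the cardinality $\lceil\rho d\rceil$; those hypotheses only fix which terms are discarded. I will also note that the bound is generally loose because it ignores cancellation among the discarded terms. It is tight when the discarded outer products are positively aligned, for example when all $A_{:,j}$ for $j\in\bar{\mathcal{I}}$ are nonnegative multiples of a common vector and likewise for the $B_{j,:}$.
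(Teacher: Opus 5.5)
Your proof is correct and follows the paper's argument: write the error as the sum of the discarded rank-one terms $\sum_{j\in\bar{\mathcal{I}}}A_{:,j}B_{j,:}$, apply the triangle inequality for the Frobenius norm, and use $\|uv^\top\|_F=\|u\|_2\|v\|_2$. Your side remarks are also accurate: the bound holds for every index set, not only the TopK set or a set of size $\lceil\rho d\rceil$, and it is attained when the discarded outer products are nonnegative multiples of a common matrix.
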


\begin{proof}
The full matrix product decomposes as $AB = \sum_{j=1}^{d} A_{:,j}\, B_{j,:}$. The RMM approximation retains only the terms indexed by $\mathcal{I}$, so the error is
\begin{equation}
AB - A_{:,\mathcal{I}}\, B_{\mathcal{I},:} = \sum_{j \in \bar{\mathcal{I}}} A_{:,j}\, B_{j,:}.
\end{equation}
Applying the triangle inequality and $\|uv^\top\|_F = \|u\|_2 \|v\|_2$ yields the bound.
\end{proof}

\subsection{A Factorized Bound via Cauchy--Schwarz}

\begin{corollary}
\label{cor:cs_bound}
Under the same notation as
Proposition~\ref{prop:rmm_bound},
\begin{multline}
\left\|
AB-A_{:,\mathcal{I}}B_{\mathcal{I},:}
\right\|_F
\\
\leq
\left\|A_{:,\bar{\mathcal{I}}}\right\|_F
\left\|B_{\bar{\mathcal{I}},:}\right\|_F .
\end{multline}
\end{corollary}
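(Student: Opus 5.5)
The plan is to start from Proposition~\ref{prop:rmm_bound}, which already gives
\[
\|AB-A_{:,\mathcal{I}}B_{\mathcal{I},:}\|_F \le \sum_{j\in\bar{\mathcal{I}}}\|A_{:,j}\|_2\,\|B_{j,:}\|_2 ,
\]
and then apply the Cauchy--Schwarz inequality in $\mathbb{R}^{|\bar{\mathcal{I}}|}$ to the right-hand side. I view this sum as the inner product of the vectors $(\alpha_j)_{j\in\bar{\mathcal{I}}}$ and $(b_j)_{j\in\bar{\mathcal{I}}}$, where $\alpha_j=\|A_{:,j}\|_2$ and $b_j=\|B_{j,:}\|_2$. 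Cauchy--Schwarz then bounds it by
\[
\Bigl(\sum_{j\in\bar{\mathcal{I}}}\alpha_j^2\Bigr)^{1/2}\Bigl(\sum_{j\in\bar{\mathcal{I}}}b_j^2\Bigr)^{1/2}.
\]

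The second step identifies each factor as a Frobenius norm of a submatrix. The sum of squared column norms over the discarded columns of $A$ is exactly the sum of squares of all entries of $A_{:,\bar{\mathcal{I}}}$, so it equals $\|A_{:,\bar{\mathcal{I}}}\|_F^2$. Likewise, the sum of squared row norms over the discarded rows of $B$ equals $\|B_{\bar{\mathcal{I}},:}\|_F^2$. Substituting these two identities gives the claimed factorized bound. The degenerate case $\bar{\mathcal{I}}=\emptyset$ is trivial, since both sides are then zero.

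There is no real obstacle here; the only care needed is bookkeeping so that the same complement set $\bar{\mathcal{I}}$ indexes both factors. As an alternative, the bound can be obtained without going through the proposition. Write the error as the single product $A_{:,\bar{\mathcal{I}}}B_{\bar{\mathcal{I}},:}$ and invoke submultiplicativity of the Frobenius norm. That route also gives a sharper intermediate bound, $\|A_{:,\bar{\mathcal{I}}}\|_2\|B_{\bar{\mathcal{I}},:}\|_F$ with the spectral norm in the first factor, but the stated corollary only requires the Frobenius version.
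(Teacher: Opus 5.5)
Your proposal is correct and matches the paper's proof: bound the error via Proposition~\ref{prop:rmm_bound}, apply Cauchy--Schwarz to the sum over $\bar{\mathcal{I}}$, and identify the two factors as $\|A_{:,\bar{\mathcal{I}}}\|_F$ and $\|B_{\bar{\mathcal{I}},:}\|_F$. Your alternative route through $\|XY\|_F\le\|X\|_2\|Y\|_F$ is also valid and gives a sharper intermediate bound, but the corollary does not need it.
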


\begin{proof}
Applying the Cauchy--Schwarz inequality to the right-hand side of
Eq.~\eqref{eq:rmm_error}, we obtain
\begin{multline}
\sum_{j\in\bar{\mathcal{I}}}
\left\|A_{:,j}\right\|_2
\left\|B_{j,:}\right\|_2
\\
\leq
\left(
\sum_{j\in\bar{\mathcal{I}}}
\left\|A_{:,j}\right\|_2^2
\right)^{1/2}
\left(
\sum_{j\in\bar{\mathcal{I}}}
\left\|B_{j,:}\right\|_2^2
\right)^{1/2}
\\
=
\left\|A_{:,\bar{\mathcal{I}}}\right\|_F
\left\|B_{\bar{\mathcal{I}},:}\right\|_F .
\end{multline}
\end{proof}

\subsection{Interpretation}

The bound in Corollary~\ref{cor:cs_bound} admits a clear interpretation: the approximation error is controlled by the product of the \emph{residual energy} in the discarded columns of $A$ and the discarded rows of $B$. Define the \emph{discarded energy ratio} of $A$ as
\begin{equation}
\epsilon_A(\rho) \triangleq \frac{\| A_{:,\bar{\mathcal{I}}} \|_F^2}{\| A \|_F^2},
\end{equation}
and analogously $\epsilon_B(\rho)$ for $B$. Then the relative error satisfies
\begin{equation}
\label{eq:relative_bound}
\frac{\| AB - A_{:,\mathcal{I}}\, B_{\mathcal{I},:} \|_F}{\| A \|_F \, \| B \|_F} \;\leq\; \sqrt{\epsilon_A(\rho)\, \epsilon_B(\rho)}.
\end{equation}

This bound, combined with Theorem~\ref{thm:minimax}, reveals two key properties:

\paragraph{(1) Energy concentration implies small error.} When the activation energy of $A$ is concentrated in a small number of dimensions---as is widely observed in Transformer hidden states---the discarded energy ratio $\epsilon_A(\rho)$ is small even at aggressive retention ratios. For instance, if the top 70\% of dimensions capture 95\% of the total energy, then $\epsilon_A(0.7) = 0.05$, and the relative error is bounded by $\sqrt{0.05 \cdot \epsilon_B(0.7)}$.

\paragraph{(2) TopK selection minimizes the $A$-side residual.} By Theorem~\ref{thm:minimax}, RMM selects the index set that minimizes $\|A_{:,\bar{\mathcal{I}}}\|_F^2$ among all choices of size $\lceil \rho d \rceil$. That is, for any alternative selection $\mathcal{I}'$ with $|\mathcal{I}'| = |\mathcal{I}|$,
\begin{equation}
\| A_{:,\bar{\mathcal{I}}} \|_F^2 \;\leq\; \| A_{:,\bar{\mathcal{I}}'} \|_F^2.
\end{equation}
This means that among all activation-only selection rules at the same retention ratio, RMM achieves the tightest possible bound on the approximation error.

\subsection{Why Input-Adaptive Selection?}
\label{sec:appendix_static_failure}

A natural alternative to RMM is \emph{static selection}: one
identifies a fixed subset of contraction-axis indices using a
calibration dataset and reuses this subset throughout inference.
Such a strategy is attractive from a systems perspective because it
avoids repeated selection and index-gathering operations. If the
same retained subset could also be reused across decoding steps, it
could potentially simplify memory access and facilitate more
efficient KV-cache management.

Our empirical results, however, show that fixed selection does not
generalize as reliably as input-adaptive selection. A subset derived
from one dataset may perform well on that dataset but degrade on
other tasks or input distributions. We observe similar behavior
within attention heads: dimensions that appear unimportant for one
task, layer, or input are not necessarily unimportant in other
contexts. These observations suggest that the locations of redundant
computation are not globally fixed.

This behavior motivates the input-adaptive design of RMM. Rather
than assuming a single retained subset shared across inference
contexts, RMM recomputes the selected indices from the current
activations. The formal results above justify the local TopK rule
under the stated activation-only surrogate, while the empirical
comparisons show why applying this rule dynamically is preferable to
reusing a fixed subset.

Dynamic selection does not eliminate the accuracy--efficiency
trade-off: sufficiently aggressive reduction can still degrade
performance on sensitive tasks or components. Nevertheless, the
results indicate that substantial redundancy can be exploited
without imposing a task-specific, globally fixed reduction pattern.
They also motivate future training-aware approaches that explicitly
encourage representations to become more concentrated or
hardware-friendly.

\subsection{An Interpretive Perspective on Dynamic Subspaces}
\label{sec:appendix_dynamic_subspaces}

The preceding formal analysis explains the activation-aware TopK
rule for a given matrix product, but it does not explain why the
preferred indices vary across inputs, layers, and decoding steps. We
therefore provide an interpretive perspective on this empirical
behavior. The following discussion is heuristic and does not
constitute an additional formal guarantee.

Let $h_t^\ell \in \mathbb{R}^d$ denote the hidden state at layer
$\ell$ and decoding step $t$. A static reduction rule implicitly
assumes that there exists a global subspace
$U \subset \mathbb{R}^d$ such that
\begin{equation}
h_t^\ell \approx \Pi_U(h_t^\ell)
\end{equation}
across inputs, layers, and decoding steps. In effect, this assumes
that a single low-dimensional structure can preserve the relevant
information in all inference contexts.

Our observations instead suggest that the effective representation
subspace may vary with the input, layer, and decoding state. One
possible explanation is that semantic information is encoded in a
distributed manner rather than being permanently associated with a
fixed set of coordinate dimensions. Consequently, the dimensions
carrying the most relevant activation energy can change across
contexts.

A complementary interpretation views a deep Transformer as a
nonlinear dynamical system in which each layer applies a
state-dependent transformation,
\begin{equation}
h^{\ell+1}=f^\ell(h^\ell).
\end{equation}
Under this view, imposing the same fixed low-dimensional restriction
at every intermediate state introduces a structured perturbation.
Subsequent nonlinear transformations may amplify or redirect this
perturbation, producing errors that depend on the particular input
trajectory.

Geometrically, activations for a given input may be viewed as lying
near a locally low-dimensional region whose salient directions vary
across inputs and layers. Static selection uses a single global
coordinate subspace, whereas RMM selects an input-dependent
coordinate subspace that may better align with the locally salient
activation directions. This analogy offers one possible explanation
for the stronger empirical robustness of dynamic selection, while
remaining distinct from the formal guarantees established above.

Together, the formal and interpretive analyses provide
complementary views of RMM: the former justifies how indices are
selected for the current matrix product, while the latter offers a
possible explanation for why the retained set should adapt across
inference contexts. Developing a more complete theory of these
dynamic representation subspaces remains an open direction.

\end{document}